\DeclareMathOperator*{\argmin}{arg\,min}
\newtheorem{theorem}{Theorem}
\newtheorem{assumption}{Assumption}
\newtheorem{definition}{Definition}
\def\BibTeX{{\rm B\kern-.05em{\sc i\kern-.025em b}\kern-.08emT\kern-.1667em\lower.7ex\hbox{E}\kern-.125emX}}
\begin{document}

\title{Dual Averaging Method for Online Graph-structured Sparsity}

\author{Baojian Zhou}
\email{bzhou6@albany.edu}
\affiliation{%
  \institution{University at Albany, SUNY}
  \streetaddress{1400 Washington Avenue}
  \city{Albany}
  \state{NY, USA}
  \postcode{12222}
}

\author{Feng Chen}
\email{fchen5@albany.edu}
\affiliation{%
  \institution{University at Albany, SUNY}
  \streetaddress{1400 Washington Avenue}
  \city{Albany}
  \state{NY, USA}
  \postcode{12222}
}

\author{Yiming Ying}
\email{yying@albany.edu}
\affiliation{%
  \institution{University at Albany, SUNY}
  \streetaddress{1400 Washington Avenue}
  \city{Albany}
  \state{NY, USA}
  \postcode{12222}
}

\begin{abstract}
Online learning algorithms update models via one sample per iteration, thus efficient to process large-scale datasets and useful to detect malicious events for social benefits, such as disease outbreak and traffic congestion on the fly. However, existing algorithms for graph-structured models focused on the  offline setting and the least square loss, incapable for online setting, while methods designed for online setting cannot be directly applied to the problem of complex (usually non-convex) graph-structured sparsity model. To address these limitations, in this paper we propose a new algorithm for graph-structured sparsity constraint problems under online setting, which we call \textsc{GraphDA}. The key part in \textsc{GraphDA} is to project both averaging gradient (in dual space) and primal variables (in primal space) onto lower dimensional subspaces, thus capturing the graph-structured sparsity effectively. Furthermore, the objective functions assumed here are generally convex so as to handle different losses for online learning settings. To the best of our knowledge, \textsc{GraphDA} is the first online learning algorithm for graph-structure constrained optimization problems. To validate our method, we conduct extensive experiments on both benchmark graph and real-world graph datasets. Our experiment results show that, compared to other baseline methods, \textsc{GraphDA} not only improves classification performance,  but also successfully captures graph-structured features more effectively, hence stronger interpretability.
\end{abstract}

\keywords{online learning; dual averaging; graph-structured sparsity}

\maketitle

\section{Introduction}
\label{section:introduction}

As a new paradigm in machine learning, convex online learning algorithms have received enormous attention~\cite{ying2006online,duchi2011adaptive,xiao2010dual,hazan2016introduction,shalev2012online,kingma2014adam,ying2017unregularized}. These algorithms update learning models sequentially by using one training sample at each iteration, which makes them applicable to large-scale datasets on the fly and still enjoy \textit{non-regret} property. For better interpretability and less computational complexity in high dimension data, many online learning algorithms~\cite{xiao2010dual,yang2010online,langford2009sparse,duchi2011adaptive} exploit $\ell_1$ norm or $\ell_1/\ell_2$ mixed norm to achieve sparse solution~\cite{xiao2010dual,yang2010online,jacob2009group}. However, these sparsity-inducing models cannot characterize more complex (usually non-convex) graph-structured sparsity constraint, hence, unable to use some important priors such as graph data. 

Graph-structured sparsity models have significant real-world applications, for example, social events~\cite{rozenshtein2014event}, disease outbreaks~\cite{qian2014connected}, computer viruses~\cite{draief2006thresholds}, and gene networks~\cite{chuang2007network}. These applications all contain graph structure information and the data samples are usually collected on the fly, i.e., the training samples have been received and processed one by one. Unfortunately, most of the graph-structured (non-convex) methods~\cite{hegde2015nearly,hegde2016fast,chen2016generalized,aksoylar2017connected} are batch learning-based, which cannot be applied to the online setting. The past few years have seen a surge of convex online learning algorithms, such as online projected gradient descent~\cite{zinkevich2003online}, \textsc{AdaGrad}~\cite{duchi2011adaptive}, \textsc{Adam}~\cite{kingma2014adam}, \textsc{$\ell_1$-RDA}~\cite{xiao2010dual}, \textsc{FOBOS}~\cite{duchi2009efficient}, and many others (e.g.~\cite{hazan2016introduction,shalev2012online}). However, they cannot be used to tackle online graph-structured sparsity problems due to the limitation of sparsity-inducing norms.

In recent years, machine learning community~\cite{lafond2015online,hazan2017efficient,chen2018online,gonen2018learning,gao2018online,yang2018optimal} have made promising progress on online non-convex optimization with regards to algorithms and \textit{local-regret} bounds.  Nonetheless, these algorithms cannot deal with graph-structured sparsity constraint problems due to the following two limitations: 1) The existing non-convexity assumption is only on the loss functions subject to a convex constraint; 2) Most of these proposed algorithms are based on online projected gradient descent (PGD), and cannot explore the structure information, hardly workable for graph-structured sparsity constraint. To the best of our knowledge, there is no existing work to tackle the combinatorial non-convexity constraint problems under online setting. 

In this paper, we aim to design an approximated online learning algorithm that can capture graph-structured information effectively and efficiently. To address this new and challenging question, the potential algorithm has to meet two crucial requirements: 1) \textit{graph-structured:} The algorithm should effectively capture the latent graph-structured information such as trees, clusters, connected subgraphs; 2) \textit{online:} The algorithm should be efficiently applicable to online setting where training samples can only be processed one by one. Our assumption on the problem has a non-convex constraint but with a convex objective, which will sustain higher applicability in the practice of our setting. Inspired by the success of dual-averaging~\cite{nesterov2009primal,xiao2010dual}, we propose the Graph Dual Averaging Algorithm, namely, \textsc{GraphDA}. The key part in \textsc{GraphDA} is to keep track of both averaging gradient via dual variables in dual space and primal variables in primal space.  We then use two approximated projections to project both primal variables and dual variables onto low dimension subspaces at each iteration. We conduct extensive experiments to demonstrate that by projecting both primal and dual variables, \textsc{GraphDA} captures the graph-structured sparsity effectively. Overall, our contributions are as follows:

\noindent$\bullet$ We propose a dual averaging-based algorithm to solve graph-structured sparsity constraint problems under online setting. To the best of our knowledge, it is a first attempt to establish an online learning algorithm for the graph-structured sparsity model.

\noindent$\bullet$ We prove the minimization problem occurring at each dual averaging step, which can be formulated as two equivalent optimization problems: minimization problem in primal space and maximization problem in dual space. The two optimization problems can then be solved approximately by adopting two popular projections. Furthermore, we provide two exact projection algorithms for the non-graph data.

\noindent$\bullet$ We conduct extensive experiments on both synthetic and real-world graphs. The experimental results demonstrate that \textsc{GraphDA} can successfully capture the latent graph-structure during online learning process. The learned model generated by our algorithm not only achieves higher classification accuracy but also stronger interpretability compared with the state-of-the-art algorithms.

The rest of the paper is organized as follows: Related work is teased out in Section~\ref{section:relate-work}. Section~\ref{section:preliminary} gives the notations and problem definition.  In Section~\ref{section:algorithms}, we present our main idea and algorithms. We report and discuss the experiment results in comparison with other baseline methods in Section~\ref{section:experiments}. A short conclusion ensues in Section~\ref{section:conclusion}. Due to space limit, the detailed experimental setup and partial experimental results are supplied in Appendix. Our source code including baseline methods and datasets are accessible at: \textcolor{blue}{\url{https://github.com/baojianzhou/graph-da}}. 

\section{Related Work}
\label{section:relate-work}

In line with the focus of the present work, we categorize highly related researches into three sub-topics for the sake of clarity.

\textbf{Online learning with sparsity.\quad} Online learning algorithms~\cite{zinkevich2003online, bottou2004large,ying2008online,hazan2016introduction,shalev2012online} try to solve classification or regression problems that can be employed in a fully incremental fashion.  A natural way to solve online learning problem is to use stochastic gradient descent by using one sample at a time. However, this type of methods usually cannot produce any sparse solution. The gradient of only one sample has such a large variance that renders its projection unreliable. To capture the model sparsity, $\ell_1$ norm-based~\cite{bottou1998online,duchi2009efficient,duchi2011adaptive,langford2009sparse,xiao2010dual} and $\ell_1/\ell_2$ mixed norm-based~\cite{yang2010online} are used  under online learning setting; the dual-averaging~\cite{xiao2010dual} adds a convex regularization, namely \textsc{$\ell_1$-RDA} to learn a sparsity model. Based on the dual averaging work, online group lasso and overlapping group lasso are proposed in~\cite{yang2010online}, which provides us a sparse solution. However, the solution cannot produce methods directly applicable to graph-structured data. For example, as pointed out by~\cite{xiao2010dual}, the levels of sparsity proposed in \cite{duchi2009efficient,langford2009sparse} are not satisfactory compared with their batch counterparts.

\textbf{Model-based sparsity.\quad} Different from $\ell_1$-regularization~\cite{tibshirani1996regression} or $\ell_1$-ball constraint-based method~\cite{duchi2008efficient}, model-based sparsity are non-convex~\cite{baraniuk2010model,hegde2015approximation,hegde2015nearly,hegde2016fast}. Using non-convex such as $\ell_0$ sparsity based methods~\cite{yuan2014gradient,bahmani2013greedy,zhou2018efficient,nguyen2017linear} becomes popular, where the objective function is assumed to be convex with a sparsity constraint. To capture graph-structured sparsity constraint such as trees and connected graphs, a series of work ~\cite{baraniuk2010model,hegde2014fast,hegde2016fast,hegde2015nearly} has proposed to use structured sparsity model $\mathbb{M}$ to define allowed supports $\mathbb{M}=\{S_1,S_2,\ldots,S_k\}$. These complex models are non-convex, and gradient descent-based algorithms involve a projection operator which is usually NP-hard. \cite{hegde2015nearly,hegde2016fast,hegde2015approximation} use two approximated projections (head and tail) without sacrificing too much precision. However, the above research work cannot be directly applied to online setting and the objective function considered is not general loss.

\textbf{Online non-convex optimization.\quad} The basic assumption in recent progress on online non-convex optimization~\cite{lafond2015online,hazan2017efficient,chen2018online,gonen2018learning,gao2018online,yang2018optimal} is that the objective considered is non-convex. \textit{Local-regret} bound has been explored in these studies, most of which are based on projected gradient descent methods, for example, online projected gradient descent~\cite{hazan2017efficient} and online normalized gradient descent~\cite{gao2018online}. However, these online non-convex algorithms cannot deal with our problem setting where there exists a combinatorial non-convex structure.

\section{Preliminaries}
\label{section:preliminary}

We begin by introducing some basic mathematical terms and notations, and then define our problem. 
\subsection{Notations}
An index set is defined as $[p]=\{1,\ldots,p\}$. The bolded lower-case letters, e.g., ${\bm w}, {\bm x} \in \mathbb{R}^p$, denote column vectors where their $i$-th entries are $w_i$, $x_i$. The $\ell_2$-norm of ${\bm w}$ is denoted as $\|{\bm w}\|_2$. The inner product of ${\bm x}$ and ${\bm y}$ on $\mathbb{R}^p$ is defined as $\langle {\bm x}, {\bm y} \rangle = x_1 y_1 + \cdots + x_p y_p$. Given a differentiable function $f({\bm w}): \mathbb{R}^p \rightarrow \mathbb{R}$, the gradient at ${\bm w}$ is denoted as $\nabla f({\bm w})$. The support set of ${\bm w}$, i.e., $\text{supp}({\bm w}):= \{i|w_i \ne 0\}$, is defined as a subset of indices which index non-zero entries. If $|\text{supp}({\bm w})| \leq s$, ${\bm w}$ is called an $s$ sparse vector. The upper-case letters, e.g., $\Omega$, denote a subset of $[p]$ and its complement is $\Omega^c=[p]\backslash \Omega$. The restricted vector of ${\bm w}$ on $\Omega$ is denoted as ${\bm w}_{\Omega} \in \mathbb{R}^p$, where $({\bm w}_\Omega)_i = w_i$ if $i\in \Omega$; otherwise 0. We define the undirected graph as $\mathbb{G}(\mathbb{V},\mathbb{E})$, where $\mathbb{V}=[p]$ is the set of nodes and $\mathbb{E}$ is the set of edges such that $\mathbb{E} \subseteq \{(u,v)| u\in \mathbb{V}, v \in \mathbb{V} \}$.  The upper-case letters, e.g., $H, T, S$, stand for subsets of $[p]:=\{1,2,\ldots, p\}$. Given the standard basis $\{{\bm e}_i: 1\leq i\leq p\}$ of $\mathbb{R}^p$, we also use $H, T, S$ to represent subspaces. For example, the subspace $S$ is the subspace spanned by $S$, i.e., $\text{span}\{{\bm e}_i: i \in S \}$. We will clarify the difference only if confusion occurs. 

\subsection{Problem Definition}
Here, we study an online non-convex optimization problem, which is to minimize the regret as defined in the following:
\begin{equation}
    R(T,\mathcal{M}(\mathbb{M})) := \sum_{t=1}^{T} f_t({\bm w}_t, \{{\bm x}_t, y_t\}) - \min_{{\bm w} \in \mathcal{M}(\mathbb{M})}\sum_{t=1}^{T} f_t({\bm w}),
    \label{definition:regret}
\end{equation}
where each $f_t({\bm w}_t,\{{\bm x}_t,y_t\})$ is the loss that a learner predicts an answer for the question ${\bm x}_t$ after receiving the correct answer $y_t$, and $\min_{{\bm w} \in \mathcal{M}(\mathbb{M})} \sum_{t=1}^{T} f_t({\bm w})$ is the minimum loss that the learner can potentially get. To simplify, we assume $f_t$ is convex differentiable. For example, we can use the least square loss $f_t({\bm w}_t,\{{\bm x}_t,y_t\}) = ({\bm w}_t^\top {\bm x}_t - y_t)^2$ for the online linear regression problem and logistic loss $f_t({\bm w}_t,\{{\bm x}_t,y_t\})$ $= \log( 1+ \exp (- y_t \cdot{\bm w}^\top {\bm x}_t))$ for online binary classification problem where $y_t\in \{\pm 1\}$. The goal of the learner is to minimize the regret $R(T,\mathcal{M}(\mathbb{M}))$. Different from the online convex optimization setting in~\cite{shalev2012online,hazan2016introduction}, $\mathcal{M}(\mathbb{M})\subseteq \mathbb{R}^p$ is a generally non-convex set. To capture more complex graph-structured information, in a series of seminal work~\cite{baraniuk2010model,hegde2015approximation,hegde2015nearly}, a structured sparsity model $\mathcal{M}(\mathbb{M})$ is proposed as follows:
\begin{equation}
    \mathcal{M}(\mathbb{M}) := \{{\bm w} | \text{supp}({\bm w}) \subseteq S \text{ for some } S \in \mathbb{M}\},
    \label{equation:model_m}
\end{equation}
where $\mathbb{M}=\{S_1,S_2,\ldots,S_k\}$ is the collection of allowed structure supports with $S_i \in [p]$. Basically, $\mathcal{M}(\mathbb{M})$ is the union of $k$ subspaces. Each subspace is uniquely identified by $S_i$. Definition~(\ref{equation:model_m}) is so general that it captures a broad spectrum of graph-structured sparsity models such as trees~\cite{hegde2014fast}, connected subgraphs~\cite{hegde2015nearly,chen2016generalized}. We mainly focus on the Weighted Graph Model(WGM) proposed in~\cite{hegde2015nearly}. 

\begin{definition}[Weighted Graph Model~\cite{hegde2015nearly}]
Given an underlying graph $\mathbb{G}=(\mathbb{V},\mathbb{E}, {\bm c})$ defined on the coefficients of the unknown vector ${\bm w}$, where $\mathbb{V}=[p]$, $\mathbb{E}\subseteq \mathbb{V}\times \mathbb{V}$ and associated cost vector ${\bm c}$ on edges, then the weighted graph model $(\mathbb{G},s,g,B)$-WGM can be defined as the following set of supports:
\begin{align*}
\mathbb{M}=&\{F: |F|\leq s, \text{ there is an forest } \mathcal{F} \text{ with } \\
&\quad\mathbb{V}_{\mathcal{F}} = F, \gamma(\mathcal{F}) = g, \text{ and } {\bm c}(\mathcal{F})\leq B\},
\end{align*}
where $B$ is the budget on cost of edges in forest $\mathcal{F}$, $\gamma(\mathcal{F})$ is the number of connected component in forest $\mathcal{F}$ denoted as $g$, and $s$ is the sparsity. To clarify, forest $\mathcal{F}$ is the subgraph induced by its nodes set $F$, i.e. $\mathcal{F}:=\mathbb{G}(F,\mathbb{E}')$, where $\mathbb{E}' = \{(u,v): u \in F, v\in F, (u,v) \in \mathbb{E}\}$. ${\bm c}(F)$ is the total edge costs in forest $\mathcal{F}$.
\end{definition}

\begin{figure}[ht!]
\centering
\begin{tikzpicture}
  [scale=.9,auto=left,every node/.style={draw,circle,thick,fill=black!10}]
  \node (n16) at (1,9)  {${w}_6$};
  \node (n14) at (0,8)  {${w}_4$};
  \node[fill=red!50] (n15) at (1,8)  {${w}_5$};
  \node[fill=red!50] (n11) at (2,8)  {${w}_1$};
  \node[fill=red!50] (n12) at (2,7)  {$w_2$};
  \node (n13) at (0,7)  {$w_3$};
  \node[fill=red!50]  (n26) at (4,9)  {$w_6$};
  \node (n24) at (3,8)  {$w_4$};
  \node[fill=red!50]  (n25) at (4,8)  {$w_5$};
  \node (n21) at (5,8)  {$w_1$};
  \node[fill=red!50]  (n22) at (5,7)  {$w_2$};
  \node (n23) at (3,7)  {$w_3$};
  \node[fill=red!50]  (n36) at (7,9)  {$w_6$};
  \node[fill=red!50]  (n34) at (6,8)  {$w_4$};
  \node[fill=red!50]  (n35) at (7,8)  {$w_5$};
  \node (n31) at (8,8)  {$w_1$};
  \node (n32) at (8,7)  {$w_2$};
  \node[fill=red!40]  (n33) at (6,7)  {$w_3$};
  \foreach \from/\to in {n16/n14,n14/n15,n12/n13,n13/n14,
  n26/n24,n24/n25,n25/n21,n21/n22,n22/n23,n23/n24,
  n35/n31,n31/n32,n32/n35,n32/n33}
    \draw[line width=0.3mm] (\from) -- (\to);
    \foreach \from/\to in {n15/n11,n11/n12,n12/n15}
    \draw[line width=1.6mm, red!50] (\from) -- (\to);
    \foreach \from/\to in {n22/n25}
    \draw[line width=1.6mm, red!50] (\from) -- (\to);
    \foreach \from/\to in {n36/n34,n34/n35,n33/n34}
    \draw[line width=1.6mm, red!50] (\from) -- (\to);
\draw[dashed] (2.5,6.5) -- (2.5,9.6);
\draw[dashed] (5.5,6.5) -- (5.5,9.6);
\newcommand{\Cross}{$\mathbin{\tikz [x=1.4ex,y=1.4ex,line width=.2ex, red] \draw (0,0) -- (1,1) (0,1) -- (1,0);}$}%
\newcommand{\Checkmark}{$\color{green}\checkmark$}
\node[draw=white,fill=white] at (1.2, 6.3)  {\Checkmark};
\node[draw=white,fill=white] at (4.2, 6.3)  {\Cross};
\node[draw=white,fill=white] at (7.2, 6.3)  {\Cross};
\end{tikzpicture}
\vspace{-5mm}
\caption{A toy example of Weighted Graph Model}
\label{figure:toy_example}
\end{figure}
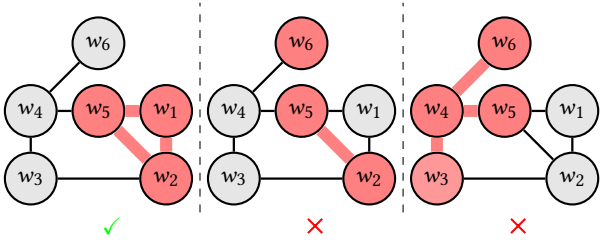

$(\mathbb{G},s,g,B)$-WGM captures a broad range of graph structures such as groups, clusters, trees, and subgraphs. A toy example is given in Figure~\ref{figure:toy_example} where we define a graph with 6 nodes in $\mathbb{V}$, 7 edges in $\mathbb{E}$, and the cost of all edges is set to $1$. Let $\mathbb{V}$ be associated with a vector ${\bm w}\in \mathbb{R}^6$. Suppose we are interested in connected subgraphs\footnote{A connected subgraph is the subgraph which has only 1 connected component.} with at most 3 nodes, to capture these subgraphs, $\mathbb{M}$ can be defined as $\mathbb{M} = \{ S_i| \mathbb{G}(S_i,\mathbb{E}_i) \text{ is connected }, |S_i|\leq 3 \}$. By letting the budget $B=3$, and the sparsity parameter $s=3$, we can clearly use $(\mathbb{G},3,1,3)$-WGM to represent this $\mathbb{M}$. Figure~\ref{figure:toy_example} shows three subgraphs formed by red nodes and edges. The subgraph induced by $\{w_1,w_2,w_5\}$ on the left is in $\mathbb{M}$. However, the subgraph induced by $\{w_2,w_5,w_6\}$ in the middle is not in $\mathbb{M}$ because of the non-connectivity. The subgraph formed by $\{w_3,w_4,w_5,w_6\}$ on the right is not in $\mathbb{M}$ either, as it violates the sparsity constraint, i.e., $s\leq 3$.

After defining the structure-sparsity model $\mathcal{M}(\mathbb{M})$, we explore how to design an efficient and effective algorithm to minimize the regret under model constraint. An intuitive way to do this is to use online projected gradient descent~\cite{zinkevich2003online} where the algorithm needs to solve the following projection at iteration $t$:
\begin{equation}
    {\bm w}_{t+1} = {\rm P}({\bm w}_t - \eta_t \nabla f_{t}({\bm w}_t), \mathcal{M}(\mathbb{M})),
    \label{equation:online_pgd}
\end{equation}
where $\eta_t$ is the learning rate and ${\rm P}$ is the projection operator onto $\mathcal{M}(\mathbb{M})$, i.e., ${\rm P}(\cdot, \mathcal{M}(\mathbb{M})):$ $\mathbb{R}^{p} \rightarrow \mathbb{R}^p$ is defined as
\begin{equation}
    {\rm P}({\bm w},\mathcal{M}(\mathbb{M})) = \argmin_{{\bm x} \in \mathcal{M}(\mathbb{M})} 
    \| {\bm w} - {\bm x} \|^2.
    \label{equ:projection_operator}
\end{equation}
However, there are two essential drawbacks of using~(\ref{equation:online_pgd}): First, the projection in (\ref{equation:online_pgd}) only uses single gradient $\nabla f_t({\bm w}_t)$ which is too noisy (large variance) to capture the graph-structured information at each iteration; Second, the training samples coming later are less important than these coming earlier due to the decay of learning rate $\eta_t$. Recall that $\eta_t$ needs to decay asymptotically to $\mathcal{O}(1/\sqrt{t})$ in order to achieve a non-regret bound. Fortunately, inspired by~\cite{nesterov2009primal,xiao2010dual}, the above two weaknesses can be successfully overcome by using dual averaging. The main idea is to keep tracking both primal vectors (corresponding to ${\bm w}_t$ in primal space) and dual variables (corresponding to gradients, $\nabla f_t({\bm w}_t)$ in dual space\footnote{Notice that we use $\ell_2$-norm, i.e. $\|\cdot\|_2$, which is defined in the Euclidean space $X=\mathbb{R}^p$. By definition, the dual norm of $\ell_2$ is identical to itself, i.e., $\|\cdot\|_* = \|\cdot\|_2$. Also, recall that the dual space $X^*$ of the Euclidean space is also identical with each other $X^*=X=\mathbb{R}^p$.}) at each iteration. In the next section, we focus on designing an efficient algorithm by using the idea of dual averaging to capture graph-structured sparsity under online setting.

\section{Algorithm: \textsc{GraphDA}}
\label{section:algorithms}
We try to develop a dual averaging-based method to minimize the regret~(\ref{definition:regret}). At each iteration, the method updates ${\bm w}_t$ by using the following minimization step:
\begin{align}
    {\bm w}_{t+1} &= \argmin_{{\bm w} \in \mathcal{M}(\mathbb{M})} \Bigg\{\frac{1}{t+1} \sum_{i=0}^t \langle {\bm g}_i, {\bm w} \rangle + \frac{\beta_t}{2t} \| {\bm w} \|_2^2 \Bigg\},
    \label{definition:dual_averaging_update}
\end{align}
where $\beta_t$ is to control the learning rate implicitly and ${\bm g}_i$ is a subgradient in $\partial f_i({\bm w},\{{\bm x}_i,y_i\})=\{{\bm g}: f_i({\bm z},\{{\bm x}_i,y_i\}) \geq f_i({\bm w},\{{\bm x}_i,y_i\})+\langle {\bm g}, {\bm z} - {\bm w} \rangle, \forall {\bm z} \in \mathcal{M}(\mathbb{R})\}$\footnote{As we assume $f_i$ is convex differentiable, then we have $\partial f_i({\bm w},\{{\bm x}_i,y_i\}) = \{\nabla f_i({\bm w},\{{\bm x}_i,y_i\})\}$, i.e. ${\bm g}_i =\nabla f_i({\bm w},\{{\bm x}_i,y_i\}). $}. Different from the convexity explored in \cite{nesterov2009primal} and \cite{xiao2010dual}, the problem considered here is generally non-convex, which makes it NP-hard to solve. Initially, the solution of the primal is set to zero, i.e., ${\bm w}_0={\bm 0}$.\footnote{There are two advantages: 1. $\bm w_0 = \bm 0$ is trivially in the $\mathcal{M}(\mathbb{M})$; 2. $\bm w_0 = \bm 0 \in \argmin_{\bm w} \|\bm w\|_2^2$ under convex setting~\cite{xiao2010dual} so that sublinear regret can obtain.} Then at each iteration, it computes a subgradient ${\bm g}_t$ based on current data sample $\{{\bm x}_t, y_t\}$ and then updates ${\bm w}_{t}$ by using~(\ref{definition:dual_averaging_update}) averaging gradient from the dual space. The algorithm terminates after receiving $T$ samples and returns the model ${\bm w}_T$ or $\bar{{\bm w}}_T = 1/T\sum_{t=0}^T {\bm w}_t$ depending on needs. The dual averaging step (\ref{definition:dual_averaging_update}) has two advantages: 1) The gradient information of training samples coming later will not decay when new samples are coming; 2) The averaging gradient can be accumulated during the learning process; hence we can use it to capture graph-structure information more effectively than online PGD-based methods.

Due to the NP-hardness to compute~(\ref{definition:dual_averaging_update}), it is impractical to directly use~(\ref{definition:dual_averaging_update}). Thus, we have to treat this minimization step more carefully for $\mathcal{M}(\mathbb{M})$. The minimization step~(\ref{definition:dual_averaging_update}) has the following equivalent projection problems, specified in Theorem~\ref{theorem:theorem1}.

\begin{theorem}
Assume $\beta_t = \gamma\sqrt{t}$, where $\gamma >0$ and denote $\bar{\bm s}_{t+1} = \frac{1}{t+1} \sum_{i=0}^t {\bm g}_i$. The minimization step of~(\ref{definition:dual_averaging_update}) can be expressed as the following two equivalent optimization problems:
\begin{align}
&\max_{S\in \mathbb{M}} \| P(-\frac{\sqrt{t}\bar{\bm s}_{t+1}}{\gamma}, S) \|_2^2 \\
&\min_{S\in \mathbb{M}} \|-\frac{\sqrt{t}\bar{\bm s}_{t+1}}{\gamma} - P(-\frac{\sqrt{t}\bar{\bm s}_{t+1}}{\gamma}, S) \|_2^2,
\end{align}
where $P({\bm s}, S)$ is the projection operator that projects ${\bm s}$ onto the subspace spanned by $S$.
\label{theorem:theorem1}
\end{theorem}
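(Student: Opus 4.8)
The plan is to start from the defining minimization~(\ref{definition:dual_averaging_update}) and massage the quadratic objective into a form where the inner optimization over ${\bm w}$ is a closed-ball/subspace projection, leaving an outer optimization over the discrete family $\mathbb{M}$. First I would substitute $\beta_t = \gamma\sqrt{t}$ and $\bar{\bm s}_{t+1} = \frac{1}{t+1}\sum_{i=0}^t {\bm g}_i$ to rewrite the bracketed expression as $\langle \bar{\bm s}_{t+1}, {\bm w}\rangle + \frac{\gamma}{2\sqrt{t}}\|{\bm w}\|_2^2$, and then note that minimizing over ${\bm w} \in \mathcal{M}(\mathbb{M})$ is the same as first minimizing over ${\bm w}$ with $\text{supp}({\bm w}) \subseteq S$ for a fixed $S \in \mathbb{M}$, and then minimizing the resulting value over $S$. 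For fixed $S$, the function $\langle \bar{\bm s}_{t+1}, {\bm w}\rangle + \frac{\gamma}{2\sqrt{t}}\|{\bm w}\|_2^2$ restricted to the subspace spanned by $S$ is a strictly convex quadratic, and completing the square gives the unconstrained-on-$S$ minimizer ${\bm w}^\star_S = -\frac{\sqrt{t}}{\gamma} P(\bar{\bm s}_{t+1}, S) = P\!\left(-\frac{\sqrt{t}\bar{\bm s}_{t+1}}{\gamma}, S\right)$, since restricting the coordinates of $-\frac{\sqrt{t}}{\gamma}\bar{\bm s}_{t+1}$ to $S$ is exactly the orthogonal projection onto $\text{span}\{{\bm e}_i : i \in S\}$.

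Next I would plug ${\bm w}^\star_S$ back in to compute the optimal value for each fixed $S$. Writing ${\bm u} := -\frac{\sqrt{t}\bar{\bm s}_{t+1}}{\gamma}$ for brevity, a direct computation using $\langle {\bm u}, P({\bm u},S)\rangle = \|P({\bm u},S)\|_2^2$ (a standard property of orthogonal projections) shows that the minimum value over the subspace $S$ equals $-\frac{\gamma}{2\sqrt{t}}\|P({\bm u},S)\|_2^2$. Therefore $\min_{{\bm w}\in\mathcal{M}(\mathbb{M})}\{\cdot\} = \min_{S\in\mathbb{M}} \left(-\frac{\gamma}{2\sqrt{t}}\|P({\bm u},S)\|_2^2\right) = -\frac{\gamma}{2\sqrt{t}}\max_{S\in\mathbb{M}}\|P({\bm u},S)\|_2^2$, which establishes the first equivalent form. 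For the second form I would use the Pythagorean identity $\|{\bm u}\|_2^2 = \|P({\bm u},S)\|_2^2 + \|{\bm u} - P({\bm u},S)\|_2^2$, valid because $P({\bm u},S)$ is an orthogonal projection; since $\|{\bm u}\|_2^2$ does not depend on $S$, maximizing $\|P({\bm u},S)\|_2^2$ over $S\in\mathbb{M}$ is equivalent to minimizing $\|{\bm u} - P({\bm u},S)\|_2^2$ over $S\in\mathbb{M}$, giving the second displayed problem.

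A small point that needs care is the precise sense of ``equivalent'': the two displayed problems recover the optimal objective value of~(\ref{definition:dual_averaging_update}) and, more importantly, the optimal support $S^\star$; the actual primal update is then ${\bm w}_{t+1} = P({\bm u}, S^\star) = P\!\left(-\frac{\sqrt{t}\bar{\bm s}_{t+1}}{\gamma}, S^\star\right)$ for any maximizing $S^\star$. I would state this reconstruction explicitly so the ``equivalence'' is not merely about the value. I should also note that ${\bm 0}\in\mathcal{M}(\mathbb{M})$ guarantees the inner minimization is over a nonempty set and that the optimal value is $\le 0$, consistent with the $-\frac{\gamma}{2\sqrt t}\|P(\cdot,S)\|_2^2$ formula.

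The main obstacle is not any single hard step but rather being careful with the identification of ``restriction of a vector to an index set'' with ``orthogonal projection onto the corresponding coordinate subspace'' as used in the notation section, and making sure the signs and the constant $\frac{\sqrt t}{\gamma}$ come out correctly after completing the square; everything else is routine convex-quadratic algebra plus the Pythagorean decomposition. I would organize the write-up as: (i) reduce to a fixed-$S$ subproblem, (ii) solve the fixed-$S$ strictly convex quadratic by completing the square, (iii) evaluate the optimal value and take the outer min over $\mathbb{M}$ to get the max form, (iv) apply Pythagoras to get the min form, and (v) record the resulting update rule ${\bm w}_{t+1}=P(-\sqrt t\,\bar{\bm s}_{t+1}/\gamma, S^\star)$.
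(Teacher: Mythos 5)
Your proposal is correct and follows essentially the same route as the paper: substitute $\beta_t=\gamma\sqrt{t}$, complete the square to turn the dual-averaging step into the projection $\min_{{\bm w}\in\mathcal{M}(\mathbb{M})}\|{\bm w}-(-\sqrt{t}\bar{\bm s}_{t+1}/\gamma)\|_2^2$, reduce to an optimization over supports $S\in\mathbb{M}$, and use the Pythagorean identity to pass between the max and min forms. The only cosmetic difference is that you evaluate the fixed-$S$ quadratic to reach the max form first and then derive the min form, while the paper obtains the min form first; your explicit remark that the update is recovered as ${\bm w}_{t+1}=P(-\sqrt{t}\bar{\bm s}_{t+1}/\gamma, S^\star)$ is a welcome clarification of what ``equivalent'' means.
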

\begin{proof}
The original minimization problem in~(\ref{definition:dual_averaging_update}) can be equivalently expressed as
\begin{align}
{\bm w}_{t+1} &= \argmin_{{\bm w} \in {\mathcal{M}(\mathbb{M})}}  \Big\{ \langle \bar{\bm s}_{t+1}, {\bm w} \rangle + \frac{\gamma}{2\sqrt{t}} \|{\bm w} \|_2^2\Big\} \nonumber\\
&= \argmin_{{\bm w} \in {\mathcal{M}}(\mathbb{M})}  \Big\{ \frac{\sqrt{t}}{2\gamma}\|\bar{\bm s}_{t+1}\|_2^2 + \langle \bar{\bm s}_{t+1}, {\bm w} \rangle + \frac{\gamma}{2\sqrt{t}} \|{\bm w} \|_2^2\Big\} \nonumber\\
&= \argmin_{{\bm w} \in 
{\mathcal{M}}(\mathbb{M})}  \frac{\gamma}{2\sqrt{t}}\Big\| {\bm w} - \Big(-\frac{\sqrt{t}}{\gamma}\bar{\bm s}_{t+1}\Big) \Big\|_2^2 \nonumber\\
&= \argmin_{{\bm w} \in 
{\mathcal{M}}(\mathbb{M})}  \Big\| {\bm w} - \Big(-\frac{\sqrt{t}}{\gamma}\bar{\bm s}_{t+1}\Big) \Big\|_2^2,
\label{equ:transformed_problem}
\end{align} 
where the second equality follows by adding a constant to the minimization objective and ~(\ref{equ:transformed_problem}) follows by multiplying $2\sqrt{t}/\gamma$ on the third equation. Hence, (\ref{definition:dual_averaging_update}) is equivalent to the minimization of~(\ref{equ:transformed_problem}). Clearly, (\ref{equ:transformed_problem}) is essentially the projection ${\rm P}(-(\sqrt{t}{\bm s}_{t+1})/\gamma,\mathcal{M}(\mathbb{M}))$ defined in~(\ref{equ:projection_operator}). To further explore~(\ref{equ:transformed_problem}), notice that one needs to solve the following equivalent minimization problem:
\begin{equation*}
\min_{{\bm w} \in \mathcal{M}(\mathbb{M})} 
\| -\frac{\sqrt{t}\bar{\bm s}_{t+1}}{\gamma} - {\bm w} \|_2^2 \Leftrightarrow 
\min_{S \in \mathbb{M} } \| -\frac{\sqrt{t}\bar{\bm s}_{t+1}}{\gamma} - {\rm P}(-\frac{\sqrt{t}\bar{\bm s}_{t+1}}{\gamma}, S) \|_2^2.
\end{equation*}
Here, for any ${\bm x}$, ${\rm P}({\bm x}, S)$ is an orthogonal projection operator that projects ${\bm x}$ onto subspace spanned by $S$. By the projection theorem, for any ${\bm x}$, it always has the following property:
\begin{equation}
\|{\bm x} \|_2^2 - \| {\rm P}({\bm x}, S)\|_2^2 = \| {\bm x} - {\rm P}({\bm x}, S) \|_2^2.
\nonumber
\end{equation}
Replacing ${\bm x}$ by $-\sqrt{t}\bar{\bm s}_{t+1}/\gamma$ and adding minimization to both sides with respect to subspace $S$, we obtain:
\begin{align}
&\min_{S\in \mathbb{M}} \Big\{\|-\frac{\sqrt{t}\bar{\bm s}_{t+1}}{\gamma} \|_2^2 - \| {\rm P}({-\frac{\sqrt{t}\bar{\bm s}_{t+1}}{\gamma}}, S)\|_2^2 \Big\} \nonumber \\
&= \min_{S \in \mathbb{M}}\| {-\frac{\sqrt{t}\bar{\bm s}_{t+1}}{\gamma}} - {\rm P}(-\frac{\sqrt{t}\bar{\bm s}_{t+1}}{\gamma}, S) \|_2^2.\nonumber
\end{align}
By moving the minimization into the negative term, we obtain
\begin{align}
&\|-\frac{\sqrt{t}\bar{\bm s}_{t+1}}{\gamma} \|_2^2 + \max_{S\in \mathbb{M}} \| {\rm P}(-\frac{\sqrt{t}\bar{\bm s}_{t+1}}{\gamma}, S)\|_2^2 \nonumber\\
&= \min_{S\in \mathbb{M}}\| -\frac{\sqrt{t}\bar{\bm s}_{t+1}}{\gamma} - {\rm P}(-\frac{\sqrt{t}\bar{\bm s}_{t+1}}{\gamma}, S) \|_2^2.
\end{align}
We prove the theorem.
\end{proof}

The above theorem leads to a key insight that the NP-hard problem~(\ref{equ:transformed_problem}) can be solved either by maximizing $\|{\rm P}(\sqrt{t}\bar{\bm s}_{t+1}/{\gamma}, S)\|_2^2$ or by minimizing $\| \sqrt{t}\bar{\bm s}_{t+1}/{\gamma} - {\rm P}(\sqrt{t}\bar{\bm s}_{t+1}/{\gamma}, S)\|_2^2$ over $S$. Inspired by \cite{hegde2016fast,hegde2015approximation,hegde2015nearly}, instead of solving these two problems exactly, we apply two approximated algorithms provided in~\cite{hegde2015nearly} to solve the problem approximately. We present the following two assumptions:

\begin{assumption}[Head Projection~\cite{hegde2016fast}] 
Let $\mathbb{M}$ and $\mathbb{M}_H$ be the predefined subspace models. Given any ${\bm w}$, there exists a $(c_H, \mathbb{M}, \mathbb{M}_H)$ Head-Projection which is to find a subspace $H\in \mathbb{M}_{H}$ such that
\begin{equation}
\|{\rm P}({\bm w}, H) \|^2 \geq c_H \cdot 
\max_{S \in \mathbb{M}} \| {\rm P}({\bm w}, S)\|^2,
\end{equation}
where $0 < c_H \leq 1$. We denote ${\rm P}({\bm w}, H)$ as ${\rm P}({\bm w},\mathbb{M},\mathbb{M}_H)$.
\end{assumption}

\begin{assumption}[Tail Projection~\cite{hegde2016fast}]
Let $\mathbb{M}$  and $\mathbb{M}_T$ be the predefined subspace models. Given any ${\bm w}$, there exists a $(c_T, \mathbb{M}, \mathbb{M}_T)$ Tail-Projection which is to find a subspace $T\in \mathbb{M}_T$ such that
\begin{equation}
\|{\rm P} ({\bm w}, T) - {\bm w} \|^2 \leq c_T \cdot 
\min_{S \in \mathbb{M}} \| {\bm w} - {\rm P}({\bm w}, S)\|^2,
\end{equation}
where $c_T \geq 1$. We denote ${\rm P}({\bm w}, T)$ as ${\rm P}({\bm w},\mathbb{M},\mathbb{M}_T)$.
\end{assumption}

To minimize the regret $R(T,\mathcal{M}(\mathbb{M}))$, we propose the approximated algorithm, presented in Algorithm~\ref{alg:relaxed-online-graph-da} below. Initially, the primal vector ${\bm w}_0$ and dual vector $\bar{\bm s}_0$ are all set to ${\bm 0}$. At each iteration, it works as the following four steps:

~$\bullet$ \textbf{Step 1}: The learner receives a question ${\bm x}_t$ and makes a prediction based on ${\bm x}_t$ and ${\bm w}_t$. After suffering a loss $f_t({\bm w}_t, \{{\bm x}_t,y_t\})$, it computes the gradient ${\bm g}_t$ in Line 4;

~$\bullet$ \textbf{Step 2}: In Line 5, the current gradient ${\bm g}_t$ has been accumulated into $\bar{\bm s}_{t+1}$, which is ready for the next head projection\footnote{Pseudo-code of these two projections are provided in Appendix~\ref{appendix:section1} for completeness.};

~$\bullet$ \textbf{Step 3}: The head projection inputs the accumulated gradient $\bar{\bm s}_{t+1}$ and outputs the vector ${\bm b}_{t+1}$ so that $\text{supp}({\bm b}_{t+1})\in \mathbb{M}_{H}$;

~$\bullet$ \textbf{Step 4}: The next predictor ${\bm w}_{t+1}$ is then updated by using the tail projection, i.e., $\text{supp}({\bm w}_{t+1}) \in \mathbb{M}_T$. The weight $-\sqrt{t}/\gamma$ is to control the learning rate.

The algorithm repeats the above four steps until some stop condition is satisfied. The main difference between our method and the methods in~\cite{nesterov2009primal,xiao2010dual} lies in that, we, as a first attempt, use two projections (Line 6 and Line 7), to project dual vector $\bar{\bm s}_{t+1}$ and primal vector ${\bm w}_{t+1}$ onto a graph-structured subspaces $\mathbb{M}_H$ and $\mathbb{M}_T$ respectively. In dual projection step, most of the irrelevant gradient entries have been effectively set to zero values. In primal tail projection step, we make sure ${\bm w}_{t+1}$ has been projected onto $\mathbb{M}_T$ so that the constraint of interest is satisfied. 

\begin{algorithm}[H]
\caption{\textsc{GraphDA}: Online Graph Dual Averaging Algorithm }
\begin{algorithmic}[1]
 \STATE \textbf{Input}: $\gamma$, $\mathbb{M}$
 \STATE ${\bar{\bm s}}_0 = {\bm 0}, {\bm w}_0 = {\bm 0}$
 \FOR{$t=0, 1, 2, \ldots$}
 \STATE receive $\{{\bm x}_t,y_t\}$ and compute ${\bf g}_t = \nabla f_t({\bm w}_t,\{{\bm x}_t,y_t\})$
 \STATE $\bar{{\bm s}}_{t+1} = {\bar{\bm s}_t} + {\bm g}_t$
 \STATE ${\bf b}_{t+1} = {\rm P}({\bf {\bar s}}_{t+1}, \mathbb{M})$
 \STATE ${\bm w}_{t+1} = {\rm P}(- \frac{\sqrt{t}}{\gamma} {\bm b}_{t+1}, \mathbb{M})$
 \ENDFOR
\end{algorithmic}\label{alg:relaxed-online-graph-da}
\end{algorithm}

In real applications, graph data is not always available, i.e., $\mathbb{M}$ cannot be explicitly constructed by $\mathbb{G}(\mathbb{V},\mathbb{E})$, so we often have to deal with non-graph data but still with the aim to pursue structure sparsity constraint. To compensate, we provide Dual Averaging Iterative Hard Thresholding, namely \textsc{DA-IHT}, presented in Theorem~\ref{theorem:theorem-da-iht}, to handle non-graph data cases.
\begin{theorem}
Assume that the graph information is not available or the graph is a complete graph and the budget $B$ is large enough. We can define our model $\mathbb{M}$ such that it includes all possible $s$-sparse subgraphs, i.e., $\mathbb{M}=\{S:|S|\leq s\}$. Then there exists exactly head and tail projection algorithm such that
\begin{equation}
\|{\rm P}({\bm w}, H) \|_2^2 = \max_{S \in \mathbb{M}} \| {\rm P}({\bm w}, S)\|_2^2,
\label{inequ:12}
\end{equation}
and 
\begin{equation}
\|{\rm P} ({\bm w}, T) - {\bm w} \|_2^2 = \min_{S \in \mathbb{M}} \| {\bm w} - {\rm P}({\bm w}, S)\|_2^2.
\label{inequ:13}
\end{equation}
\label{theorem:theorem-da-iht}
\end{theorem}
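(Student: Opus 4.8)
The plan is to show that when $\mathbb{M}=\{S:|S|\leq s\}$ is the model of \emph{all} $s$-sparse supports, ordinary hard thresholding --- retaining the $s$ coordinates of largest magnitude --- is simultaneously an \emph{exact} head projection and an \emph{exact} tail projection, so that one may take $c_H=c_T=1$ with $\mathbb{M}_H=\mathbb{M}_T=\mathbb{M}$, which is precisely what~(\ref{inequ:12}) and~(\ref{inequ:13}) assert.

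First I would record the structure of the projection operator for this $\mathbb{M}$. For $S\subseteq[p]$ the subspace spanned by $S$ is $\text{span}\{{\bm e}_i:i\in S\}$, so the orthogonal projection is just the restriction ${\rm P}({\bm w},S)={\bm w}_S$. Hence $\|{\rm P}({\bm w},S)\|_2^2=\sum_{i\in S}w_i^2$, and by the projection theorem already used in the proof of Theorem~\ref{theorem:theorem1}, $\|{\bm w}-{\rm P}({\bm w},S)\|_2^2=\|{\bm w}\|_2^2-\sum_{i\in S}w_i^2=\sum_{i\notin S}w_i^2$. Next I would solve the two right-hand-side optimization problems explicitly: since $\|{\bm w}\|_2^2$ is independent of $S$, maximizing $\sum_{i\in S}w_i^2$ over $S\in\mathbb{M}$ and minimizing $\sum_{i\notin S}w_i^2$ over $S\in\mathbb{M}$ are the \emph{same} problem, and both are solved by any set $S^\star$ consisting of $s$ indices carrying the largest values of $w_i^2$ (ties broken by a fixed rule). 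A one-line exchange argument shows no other subset of size at most $s$ can do better: given a feasible $S$ with $|S|\le s$, pad it to size $s$ inside $[p]$ (allowed because $\mathbb{M}$ is closed downward) and then repeatedly swap any index not among the top $s$ for a larger unused one, which never decreases $\sum_{i\in S}w_i^2$.

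The algorithm is therefore: compute $|w_1|,\dots,|w_p|$, select the indices of the $s$ largest, set $H:=T:=S^\star$, and output ${\rm P}({\bm w},S^\star)={\bm w}_{S^\star}$ --- exactly the iterative-hard-thresholding step, computable in $O(p\log p)$ time (or $O(p)$ via linear-time selection). Exactness of~(\ref{inequ:12}) and~(\ref{inequ:13}) is then immediate from the two displays above: $\|{\rm P}({\bm w},H)\|_2^2=\sum_{i\in S^\star}w_i^2=\max_{S\in\mathbb{M}}\|{\rm P}({\bm w},S)\|_2^2$ and $\|{\rm P}({\bm w},T)-{\bm w}\|_2^2=\sum_{i\notin S^\star}w_i^2=\min_{S\in\mathbb{M}}\|{\bm w}-{\rm P}({\bm w},S)\|_2^2$; and $|S^\star|\le s$ forces $S^\star\in\mathbb{M}$, so $\mathbb{M}_H=\mathbb{M}_T=\mathbb{M}$ is admissible in the Head-Projection and Tail-Projection assumptions.

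There is essentially no analytical obstacle here; the theorem is really the observation that greedy top-$s$ selection is the exact $\ell_2$ projection onto a downward-closed sparsity model. The only points deserving a sentence of care are (i) tie-breaking when several coordinates share the borderline magnitude --- any fixed rule works, since the optimal objective value is unchanged --- and (ii) the use of downward-closedness of $\mathbb{M}$, which is exactly what lets the single set $S^\star$ serve both the maximization and the minimization. I would close by noting that substituting $c_H=c_T=1$ (and this exact projection for Lines~6--7) into Algorithm~\ref{alg:relaxed-online-graph-da} yields the dual-averaging iterative-hard-thresholding scheme, i.e.\ the \textsc{DA-IHT} specialization claimed for non-graph data.
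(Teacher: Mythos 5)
Your proposal is correct and follows essentially the same route as the paper's proof: sort the coordinates by magnitude, take $H=T=S^\star$ as the top-$s$ index set, note $S^\star\in\mathbb{M}$, and conclude that hard thresholding is simultaneously an exact head and tail projection. The extra details you supply (the projection-theorem identity linking the two objectives, the exchange argument, and the tie-breaking remark) only make explicit what the paper leaves implicit.
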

\begin{proof}
Since the graph is a complete graph (i.e., all subgraphs are connected.) and the budget constraint $B$ is large enough, any subset $S$ that has $s$ elements belongs to $\mathbb{M}$. In this case, $\mathbb{M}$ contains all $s$-subsets, i.e., $\mathbb{M} = \{ S_i: |S_i| \leq s \}$. By sorting the magnitudes of ${\bm w}$ in a descending manner, we have
\begin{equation}
|w_{\tau_1}| \geq |w_{\tau_2}| \geq \ldots \geq |w_{\tau_s}| \geq \ldots \geq |w_{\tau_p}|. \nonumber
\end{equation}
Let $H = T = \{\tau_1,\tau_2,\ldots, \tau_s\}$. For any s-sparse set $S$, by the fact that $|w_{\tau_1}|, |w_{\tau_2}|, \ldots, |w_{\tau_s}|$ are the largest magnitude $s$ entries, we always have
\begin{equation}
\|{\rm P}({\bm w}, H) \|_2^2 \geq \| {\rm P}({\bm w}, S)\|_2^2. \nonumber
\end{equation}
At the same time, $H \in \mathbb{M}$, then 
\begin{equation}
\|{\rm P}({\bm w}, H) \|_2^2 \leq \max_{S\in \mathbb{M}}\| {\rm P}({\bm w}, S)\|_2^2. \nonumber
\end{equation}
Hence, we prove~(\ref{inequ:12}). In a similar vein, we can also prove~(\ref{inequ:13}).
\end{proof}
By Theorem~\ref{theorem:theorem-da-iht}, one can implement the two projections in Line 6 and 7 of Algorithm~\ref{alg:relaxed-online-graph-da} by sorting the magnitudes $\bar{\bm s}_{t+1}$ and $-\sqrt{t}/\gamma {\bm b}_{t+1}$ respectively, to deal with non-graph data. \textsc{DA-IHT} will be used as a baseline in our experiment to compare with the graph-based method, \textsc{GraphDA}.

\textbf{Time Complexity.\quad} At each iteration of \textsc{GraphDA}, the time complexity of two projections depends on the graph size $p$ and the number of edges $|\mathbb{E}|$. As proved in~\cite{hegde2015nearly}, two projections have the time complexity $\mathcal{O}(|\mathbb{E}|\log^3(p))$. In many real-world applications, the graphs are usually sparse, i.e., $\mathcal{O}(p)$, and then the total complexity of each iteration of \textsc{GraphDA} is $\mathcal{O}(p + p\log^3(p))$. Our method is characterized by two merits: 1) The time cost of each iteration is nearly-linear time; 2) At each iteration, it only has $\mathcal{O}(p + |\mathbb{E}|)$ memory cost, where $\mathcal{O}(p)$ stores the averaging gradient and current solution and $\mathcal{O}(|\mathbb{E}|)$ is to save the graph. For \textsc{DA-IHT}, we need to select the top $s$ largest magnitude entries at each iteration. Thus, the time cost of per-iteration is $\mathcal{O}(s p)$ with $\mathcal{O}(p)$ memory cost.

\textbf{Regret Discussion.\quad} Given any online learning algorithm, we are interested in whether the regret $R(T,\mathcal{M}(\mathbb{M}))$ is sub-linear and whether we can bound the \textit{estimation error} $\|{\bm w}_t - {\bm w}^*\|_2$. We first assume the primal vectors are $\{{\bm w}_t\}_{t=0}^T$ and the dual gradient sequences $\{{\bm g}_t\}_{t=0}^T$. We then assume that the potential solution ${\bm w}$ is always bounded in $D$, i.e., $\|{\bm w}\|_2 \leq D$ and gradients are also bounded, i.e., $\|{\bm g}_t\|_2 \leq L$. Then for any $T \geq 1$ and any ${\bm w}\in \mathcal{M}(\mathbb{M})$, the regret in~\cite{xiao2010dual} can be bounded as the following:
\begin{equation}
R(T,\mathcal{M}(\mathbb{M})) \leq 2D L\sqrt{T}.
\label{theorem1:regret}
\end{equation}
Given any optimal solution ${\bm w}^* \in \argmin_{{\bm w} \in \mathcal{M}(\mathbb{M})} \sum_{i=1}^T f_i({\bm w})$ and the solution ${\bm w}_T$, the \textit{estimation error}, i.e., $\|{\bm w}_{T+1}-{\bm w}^*\|$ is bounded as the following:
\begin{equation}
\| {\bm w}_{T+1} - {\bm w}^* \|_2^2 \leq 2\Big( D^2 + \frac{L^2}{\gamma^2} - \frac{1}{\gamma\sqrt{T}} R(T,\mathcal{M}(\mathbb{M})) \Big).
\label{theorem:estimation-error}
\end{equation}
However, the regret bound~(\ref{theorem1:regret}) and \textit{estimation error}~(\ref{theorem:estimation-error}) are under the assumption that the constraint set $\mathcal{M}(\mathbb{M})$ is convex. For \textsc{GraphDA}, an approximated algorithm, it is difficult to establish a sublinear regret bound. The reasons are two-fold: 1) Due to the non-convexity of $\mathcal{M}(\mathbb{M})$, it is possible that \textsc{GraphDA} converges to a local minimal, so the regret will potentially be non-sublinear; 2) The solution of model projection is approximated, making the regret analysis harder. Although recent work~\cite{gao2018online,hazan2017efficient} shows that it is still possible to obtain a \textit{local-regret} bound when the objective function is non-convex, it is different from our case since we assume the objective function convex subject to a non-convex constraint. We leave the theoretical regret bound analysis of \textsc{GraphDA} an open problem.

\section{Experiments}
\label{section:experiments}
To corroborate our algorithm, we conduct extensive experiments, comparing \textsc{GraphDA} with some popular baseline methods. Note \textsc{DA-IHT} derived from Theorem~\ref{theorem:theorem-da-iht} is treated as a baseline method. We aim to answer the following questions:
\begin{itemize}[leftmargin=*]
\item \textbf{\textit{Question Q1}: } Can \textsc{GraphDA} achieve better classification performance compared with baseline methods?
\label{question:02}
\item \textbf{\textit{Question Q2}: } Can \textsc{GraphDA} learn an stronger interpretative model through capturing more meaningful graph-structure features compared with baseline methods?
\label{question:01}
\vspace{-5mm}
\end{itemize}
\begin{figure}[H]
\subfigure[\textit{Graph01}]{\includegraphics[width=2.05cm]
{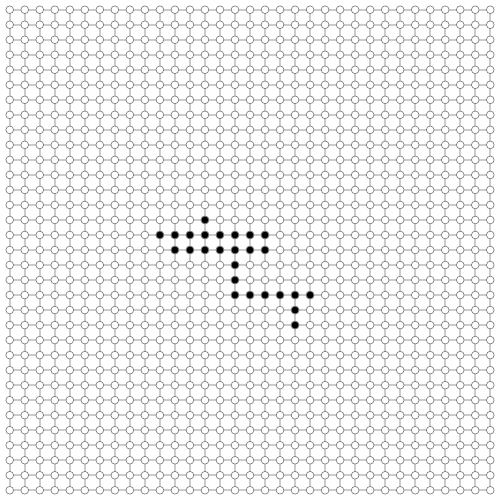}}
\hfill
\subfigure[\textit{Graph02}]{\includegraphics[width=2.05cm]
{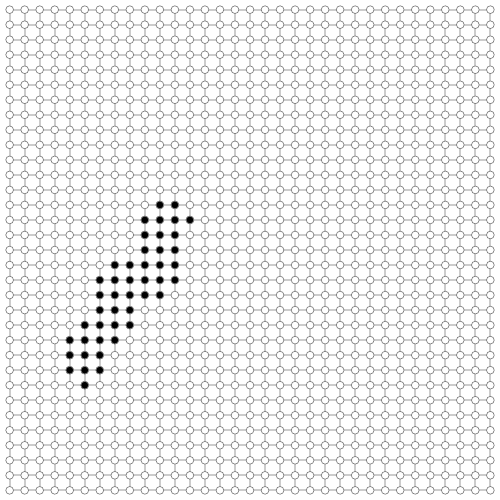}}
\hfill
\subfigure[\textit{Graph03}]{\includegraphics[width=2.05cm]
{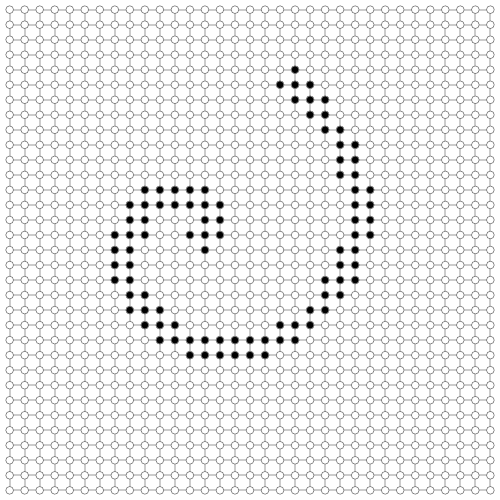}}
\hfill
\subfigure[\textit{Graph04}]{\includegraphics[width=2.05cm]
{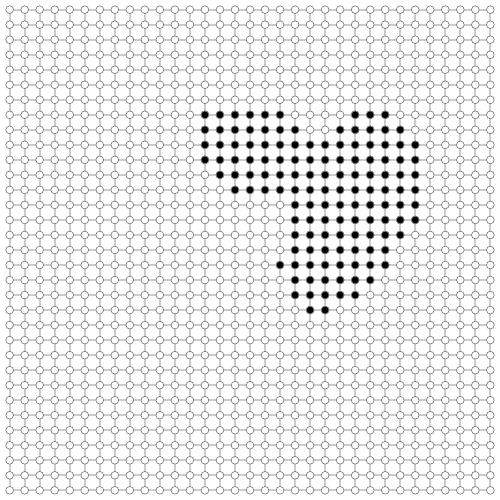}}
\vspace{-4mm}
\caption{Four benchmark graphs from~\cite{arias2011detection}}
\vspace{-2mm}
\label{figure:benchmark_graphs}
\end{figure}

\subsection{Datasets and evaluation metrics}

\textbf{Datasets.\quad} We use the following three publicly available graph datasets: 1) \textbf{Benchmark Dataset}~\cite{arias2011detection}. Four benchmark graphs~\cite{arias2011detection} are shown in Figure~\ref{figure:benchmark_graphs}. The four subgraphs are embedded into $33\times 33$ graphs with 26, 46, 92, and 132 nodes respectively. Each graph has $p=1,089$ nodes and $m = 2,112$ edges with unit weight $1.0$. We use the Benchmark dataset to learn an online graph logistic regression model; 2) \textbf{MNIST Dataset}~\cite{lecun1998mnist}. This popular hand-writing dataset is used to test \textsc{GraphDA} on online graph sparse linear regression. It contains ten classes of handwritten digits from 0 to 9. We randomly choose each digit as our target graph. Each pixel stands for a node. There exists an edge if two nodes are neighbors. We set the weights to 1.0 for edges; 3) \textbf{KEGG Pathway Dataset}~\cite{lenail2017graph}. The Kyoto Encyclopedia of Genes and Genomes (KEGG) dataset contains 5,372 genes. These genes (nodes) form a connected graph with 78,545 edges. The edge weights stand for correlations between two genes. We use KEGG to detect a related pathway.

\begin{table*}
\centering
\small
\vspace{-4mm}
\caption{Classification performance on \textit{Graph01} of Benchmark dataset}
\vspace{-4mm}
\begin{tabular}{c c c c c c c c}
\hline\hline
Method & $\textit{Pre}_{{\bm w}_t}\pm$std & $\textit{Rec}_{{\bm w}_t}\pm$std & $\textit{F1}_{{\bm w}_t}\pm$std & $\textit{AUC}_{{\bm w}_t,{\bar{\bm w}}_t}$ & $\textit{Acc}_{{\bm w}_t,{\bar{\bm w}}_t}$ & $\textit{Miss}_{{\bm w}_t,{\bar{\bm w}}_t}$ & $\textit{NR}_{{\bm w}_t,{\bar{\bm w}}_t}$\\
\hline
\textsc{Adam} & 0.024$\pm$0.00 & \textbf{1.000}$\pm$0.00 & 0.047$\pm$0.00 & (0.618, 0.603) & (0.619, 0.603) & (166.35, 173.10) & (100.0\%, 100.0\%) \\
\textsc{$\ell_1$-RDA} & 0.267$\pm$0.11 & 0.863$\pm$0.09 & 0.389$\pm$0.13 & (0.693, 0.672) & (0.694, 0.673) & (155.30, 166.05) & (11.58\%, 83.60\%) \\
\textsc{AdaGrad} & 0.256$\pm$0.11 & 0.877$\pm$0.09 & 0.379$\pm$0.13 & (0.696, 0.636) & (0.696, 0.637) & (156.00, 166.00) & (11.33\%, 100.0\%) \\
\textsc{DA-GL} & 0.176$\pm$0.11 & 0.967$\pm$0.04 & 0.283$\pm$0.12 & (0.735, 0.666) & (0.735, 0.667) & (142.90, 162.20) & (15.99\%, 100.0\%) \\
\textsc{DA-SGL} & 0.523$\pm$0.40 & 0.854$\pm$0.14 & 0.506$\pm$0.35 & (0.699, 0.647) & (0.699, 0.647) & (151.00, 165.50) & (25.54\%, 100.0\%) \\
\textsc{StoIHT} & 0.057$\pm$0.04 & 0.150$\pm$0.08 & 0.072$\pm$0.03 & (0.552, 0.523) & (0.553, 0.523) & (194.55, 195.25) & (7.79\%, 40.62\%) \\
\textsc{GraphStoIHT} & 0.151$\pm$0.12 & 0.356$\pm$0.16 & 0.194$\pm$0.12 & (0.603, 0.570) & (0.602, 0.570) & (174.65, 181.40) & (7.84\%, \textbf{22.06}\%) \\
\textsc{DA-IHT} & 0.507$\pm$0.20 & 0.744$\pm$0.12 & 0.566$\pm$0.11 & (0.697, 0.666) & (0.697, 0.666) & (155.65, 162.85) & (4.35\%, 39.50\%) \\
\textsc{GraphDA} & \textbf{0.869}$\pm$0.13 & 0.906$\pm$0.04 & \textbf{0.880}$\pm$0.08 & (\textbf{0.749}, \textbf{0.739}) 
& (\textbf{0.749}, \textbf{0.739}) & (\textbf{133.45}, \textbf{136.20}) & (\textbf{2.56}\%, 32.12\%) \\
\hline
\end{tabular}
\label{table:fig-1-selected-accuracy_graph1}
\end{table*}

\noindent\textbf{Evaluation metrics.\quad} We have two categories of metrics to answer \textbf{Question Q1} and \textbf{Question Q2} respectively. To measure classification performance of ${\bm w}_t$ or $\bar{\bm w}_{t}$\footnote{For the comparison, we also evaluate the averaged decision $\bar{\bm w}_{t}$ similar as done in~\cite{xiao2010dual}.}, we use classification Accuracy(\textit{Acc}), the Area Under Curve (\textit{AUC})~\cite{hanley1982meaning}, and the number of Misclassified samples (\textit{Miss}). To evaluate feature-level performance (interpretability), we use Precision (\textit{Pre}), Recall (\textit{Rec}), F1-score (\textit{F1}), and Nonzero Ratio (\textit{NR}). To clarify, given any optimal ${\bm w}^* \in \mathbb{R}^p$ and learned model ${\bm w}_t$, \textit{Pre}, \textit{Rec}, \textit{F1}, and \textit{NR} are defined as follows:
\begin{align}
Pre_{{\bm w}_t} &= \frac{|\text{supp}({\bm w}^*) \cap \text{supp}({\bm w}_t)|}{|\text{supp}({\bm w}_t)|} ,\ 
Rec_{{\bm w}_t} = \frac{|\text{supp}({\bm w}^*) \cap \text{supp}({\bm w}_t)|}{|\text{supp}({\bm w}^*) |}\nonumber\\
\text{\textit{F1}}_{{\bm w}_t} &= \frac{2 |\text{supp}({\bm w}^*) \cap \text{supp}({\bm w}_t) |}{ |\text{supp}({\bm w}^*) | + |\text{supp}({\bm w}_t) |},\ \text{NR}_{\bm w} = \frac{ |\text{supp}({\bm w}) |}{p}.
\label{definition:node_pre_rec_fm}
\end{align}

\subsection{Baseline methods}
We consider the following eight baseline methods: 1) \textsc{$\ell_1$-RDA}~\cite{xiao2010dual}. We use the enhanced Regularized Dual-Averaging ($\ell_1$-RDA) method in Algorithm 2 of~\cite{xiao2010dual}; 2) \textsc{DA-GL}~\cite{yang2010online}. Online Dual Averaging Group Lasso (\textsc{DA-GL}) is the dual averaging method with group Lasso; 3) \textsc{DA-SGL}~\cite{yang2010online}. It also uses dual averaging, but with sparse group Lasso as the regularization; 4) \textsc{AdaGrad}~\cite{duchi2011adaptive}. The adaptive gradient with $\ell_1$ regularization is different from \textsc{$\ell_1$-RDA}~\cite{xiao2010dual}. \textsc{AdaGrad} yields a dedicated step size for each feature inversely. In order to capture the sparsity, we use its $\ell_1$ norm-based method for comparison; 5) \textsc{Adam}~\cite{kingma2014adam}. Since there is no sparsity regularization in \textsc{Adam}, it generates totally dense models. We use its online version\footnote{One can find more details of the online version in Section 4 of~\cite{kingma2014adam}.} to compare with these sparse methods; 6) \textsc{StoIHT}~\cite{nguyen2017linear}. We use this method with block size 1, which can be treated as online learning setting; 7) \textsc{DA-IHT}, derived from Theorem~\ref{theorem:theorem-da-iht} in this paper. We use it to compare with \textsc{GraphDA}, which has graph-structure constraint; 8) \textsc{GraphStoIHT}. We apply the head and tail projection to \textsc{StoIHT} to generate \textsc{GraphStoIHT}. 

\textbf{Online Setting.} All methods are completely online, i.e., all learning algorithms receive a single sample per-iteration. Due to the space limit, the parameters of all baseline methods including \textsc{GraphDA} are in Appendix~\ref{appendix:section1}. All numerical results are averaged from 20 trials. The following three sections report and discuss the experimental results on each dataset to answer \textbf{Q1} and \textbf{Q2}.

\subsection{Results from Benchmark dataset}
Given the training dataset $\{{\bm x}_i,y_i\}_{i=1}^t$, where ${\bm x}_i \in \mathbb{R}^p$ and ${y_i}\in \{\pm 1\}$ on the fly, the online graph sparse logistic regression is to minimize the regret $R({\bm t},\mathcal{M}(\mathbb{M}))$
where $f_{t}({\bm w}_t)$ is a logistic loss defined as
\begin{equation}
f_t({\bm w}_t,\{{\bm x}_t,y_t\}) = \log ( 1+ \exp (- y_t \cdot {\bm w}_t^\top {\bm x}_t)). \nonumber
\end{equation}
We simulate the negative and positive samples as done in~\cite{arias2011detection}. $y_t=-1$ stands for no signals or events (``business-as-usual''). $y_t=+1$ means a certain event happens such as disease outbreak/computer virus hidden in current data sample ${\bm x}_t$, and feature values in subgraphs are abnormally higher. That is, if $y_t = -1$, then $x_{v_i}\sim \mathcal{N}(0,1) \ \forall \ v_i \in \mathbb{V}$; and if $y_t = +1$, then,
\begin{equation}
    (x_i)_{v_i} \sim \begin{cases} \mathcal{N}(\mu,1) & v_i \in F \\
    \mathcal{N}(0,1) & v_i \ne F,
    \end{cases}
\end{equation}
where $F$ stands for the nodes of a specific subgraph showcased in Figure~\ref{figure:benchmark_graphs}. Then each entry $({\bm w}^*)_i$ is $\mu$ if $i\in F$; otherwise 0. We first fix $\mu=0.3$ and then generate validating, training, and testing samples, each with 400 samples. All methods stop at $t=400$ after seeing all training samples once. Parameters are tuned on 400 validating samples. We test ${\bm w}_t$ and $\bar{\bm w}_t$ on testing samples.

\begin{figure}
\centering
\includegraphics[width=8.3cm,height=4.5cm]{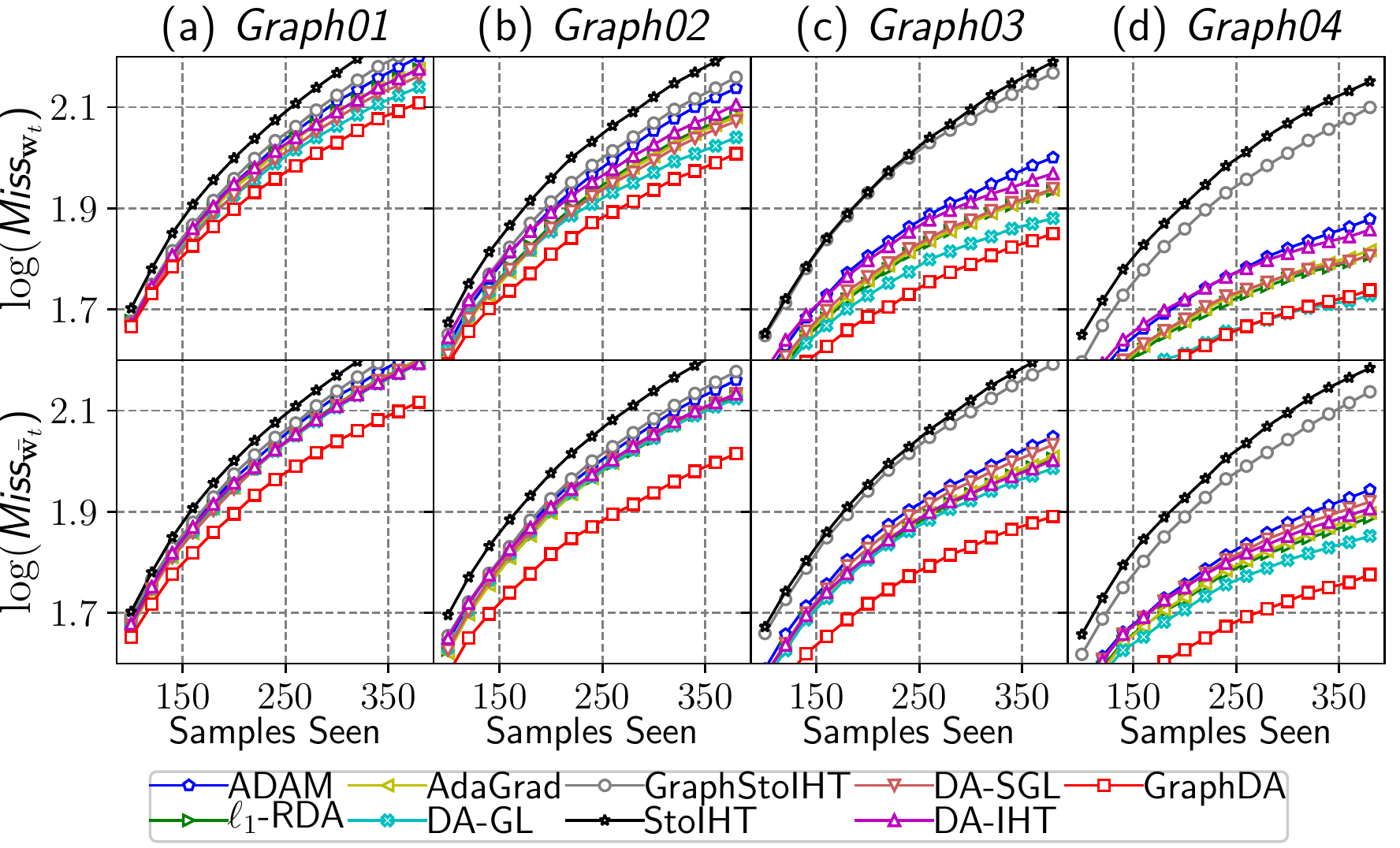}
\vspace{-2mm}
\caption{The logarithm of the number of misclassified samples as a function of samples seen}
\vspace{-4mm}
\label{figure_missed_samples_curve_0}
\end{figure}

\begin{figure*}[ht!]
    \centering
    \includegraphics[width=16.9cm,height=6.8cm]{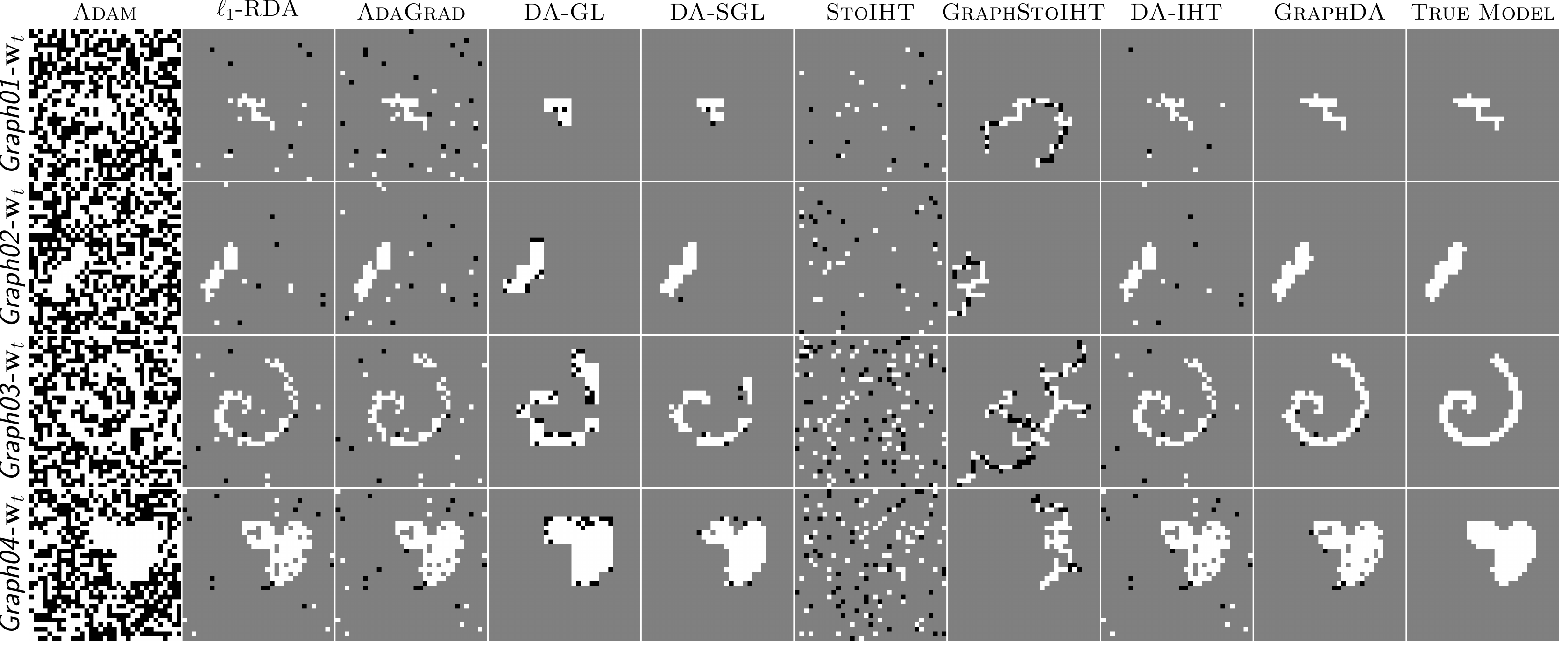}
    \vspace{-4mm}
    \caption{The learned models ${\bm w}_t$ of four benchmark graphs. These models are from the first trial of all 20 trials. For each pixel $i$, black stands for $({\bm w}_t)_i <0$, gray $({\bm w}_t)_i = 0$, and white $({\bm w}_t)_i >0$.}
    \label{figure:learned_models}
    \vspace{-4mm}
\end{figure*}

\textbf{Classification Performance on fixed $\mu$.\quad} Table~\ref{table:fig-1-selected-accuracy_graph1} shows that four all three indicators of classification performance, \textsc{GraphDA} scores higher than the other baseline methods. Specifically, it has the highest \textit{Acc} (0.749, 0.739) and \textit{AUC} (0.749, 0.739) with respect to ${\bm w}_t$ and $\bar{{\bm w}}_t$. The averaged number of misclassified samples (\textit{Miss}) is lower (133.45, 136.20), than other methods by quite a large margin. Figure~\ref{figure_missed_samples_curve_0} further shows that the number of misclassified samples of \textsc{GraphDA} keeps the lowest during the entire online learning course for all four graphs~\cite{arias2011detection}. 

\begin{figure}[H]
\centering
\vspace{-3mm}
\includegraphics[width=8.3cm,height=2.2cm]{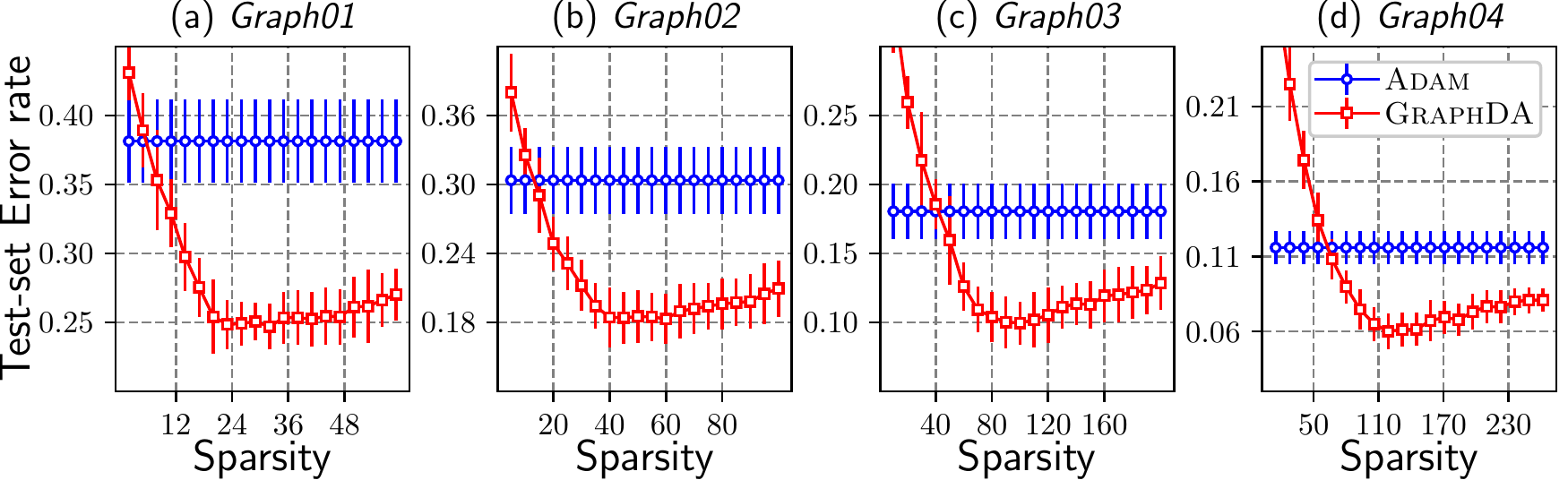}
\vspace{-3mm}
\caption{Test dataset error rates as a function of sparsity $s$}
\vspace{-5mm}
\label{figure_diff_sparsity}
\end{figure}

The sparsity $s$ is an important parameter for \textsc{GraphDA}. We explore how $s$ affects the test error rate\footnote{The test error rate is calculated as $1 - Acc_{{\bm w}_t}$ similar as done in~\cite{duchi2011adaptive}.}.  We compare the error rate of \textsc{GraphDA} with that of the non-sparse method \textsc{Adam}. Figure~\ref{figure_diff_sparsity} clearly demonstrates that \textsc{GraphDA} has the least test error rate corresponding to the true model sparsity (26, 46, 92, 132 for these four subgraphs). When $s$ reaches the true sparsity ($26, 46, 92, 132$ respectively), the testing error rate of \textsc{GraphDA} is the minimum.

\textbf{Classification Performance on different $t$ and $\mu$.\quad} We explore how different numbers of training sample and different $\mu$ affects the performance of each method similarly done in~\cite{yang2010online}. First, we choose $t$ from set $\{100, 200, 300, \ldots, 1000\}$, and tune the model based on classification accuracy. Results in Figure~\ref{figure_diff_n} show that when the number of training sample increases, the classification accuracy of all methods are increasing accordingly, but \textsc{GraphDA} enjoys the highest classification accuracy on both ${\bm w}_t$ and $\bar{\bm w}_{t}$. Second, we choose $\mu$ from the set $\{0.1, 0.2,\ldots,1.0\}$ and fix $t=400$. As is reported in Figure~\ref{figure_diff_mu}, when $\mu$ is small (a harder classification task), all methods achieve lower accuracy; when $\mu$ is large (an easier task), all methods can obtain very high accuracy except \textsc{StoIHT} and \textsc{GraphIHT}. Again, \textsc{Acc} of \textsc{GraphDA} is the highest.

\begin{figure}[H]
\centering
\vspace{-4mm}
\includegraphics[width=8cm,height=3cm]{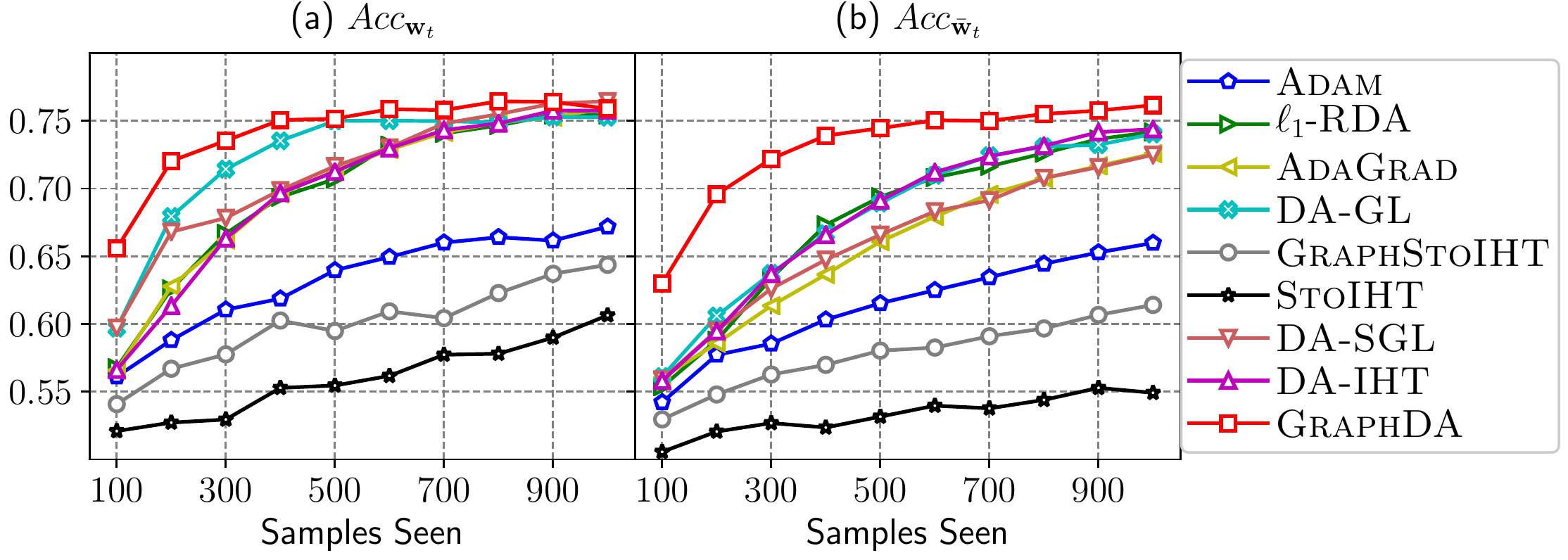}
\vspace{-4mm}
\caption{The classification accuracy on testing dataset as a function of number of training samples seen}
\vspace{-3mm}
\label{figure_diff_n}
\end{figure}

\textbf{Model interpretability.\quad} Table~\ref{table:fig-1-selected-accuracy_graph1} shows that three out of the four indicators of feature-level performance for \textsc{GraphDA} score higher than for the other baseline methods. To be more specific, our method has the highest \textit{F1}, 0.880, exceeding other methods by a large margin, which means that graph-structured information does help improve its performance. It also testifies that the head/tail projection during the online learning process does help capture more meaningful features than others. The nonzero ratio (\textit{NR}) of ${\bm w}_t$ and ${\bar{{\bm w}}_t}$ is the least. The learned ${\bm w}_t$ in Figure~\ref{figure:learned_models} shows \textsc{GraphDA} successfully captures these subgraphs in ${\bm w}_t$, which are the closest to true models in terms of shapes and values (white colored pixels). \textsc{Adam} learns a totally dense model, and hence has worse performance. \textsc{DA-IHT} and \textsc{$\ell_1$-RDA} obtain very similar performance, probably because both of them use the dual averaging techniques. The results of \textsc{StoIHT} and \textsc{GraphStoIHT} testify that the online PGD-based methods hardly learn an interpretable model. In brief, our algorithm exploits the least number of features to learn the best model among all of the methods.

\begin{figure}[ht!]
\centering
\vspace{-2mm}
\includegraphics[width=8cm,height=3cm]{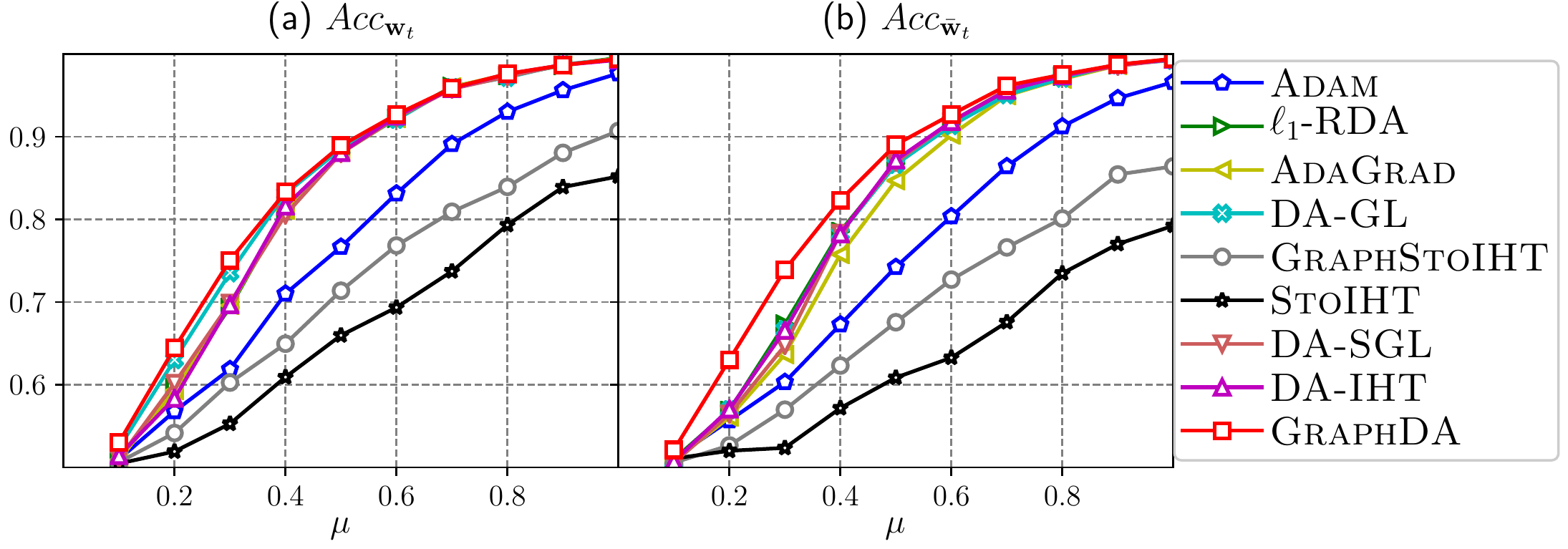}
\vspace{-3mm}
\caption{The classification accuracy on testing dataset as a function of $\mu$.}
\vspace{-5mm}
\label{figure_diff_mu}
\end{figure}

\subsection{Results from MNIST dataset}
The goal of online graph sparse linear regression is to minimize the regret where each $f_{t}({\bm w}_t)$ is the least square loss defined as
\begin{equation}
f_t({\bm w}_t,\{{\bm x}_t,y_t\}) = ( y_t - \langle {\bm w}_t,{\bm x}_t \rangle )^2.
\end{equation}
On this dataset, we use the least square loss as the objective function. The experiment is to compare the feature-level \textit{F1} score of different algorithms. We generate 1,400 data samples by using the following linear relation:
\begin{equation*}
    y_t = {\bm x}_t^\top {\bm w}^*,
\end{equation*}
where ${\bm x}_t \in \mathcal{N}({\bm 0},{\bm I})$. We use three different strategies to obtain ${\bm w}^*$. The first one is to directly use the sparse images and then normalize them to the range $[0.0,1.0]$, which we call \textit{Normalized Model}. The second is to generate ${\bm w}^*$ by letting all non-zeros be $1.0$, which is called \textit{Constant Model}. The third is to generate the nonzero nodes by using Gaussian distribution $({\bm w}^*)_i \sim \mathcal{N}(0,1)$ independently, which is \textit{Gaussian Model}. Again, our dataset is partitioned into three parts: training, validating and testing samples. We increase the number of training samples $n$  from $\{50,100,\ldots,1000\}$ and then use 200 samples as validating dataset to tune the model.  For all the eight online learning algorithms, we pass each training sample once and stop training when all training samples are used. The results shown in Figure~\ref{figure:mnist-0-4-5} are generated from the 200 testing samples.

\begin{figure}
\centering
\includegraphics[width=7.5cm,height=8cm]{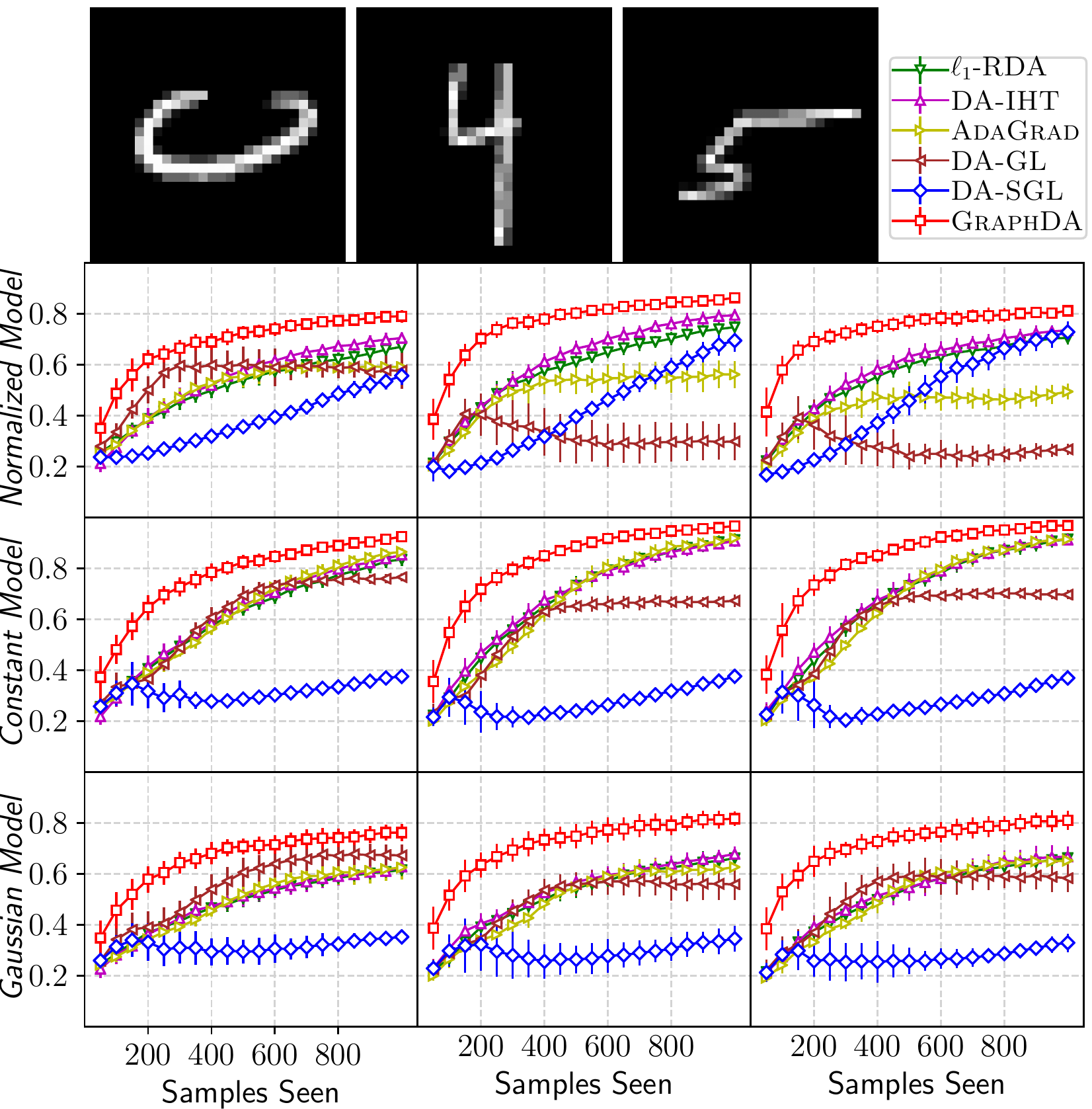}
\vspace{-3mm}
\caption{Three handwritten digits 0, 4 and 5 (top row) and the \textit{F1} score as a function of samples seen (2nd to 4th row).}
\vspace{-6mm}
\label{figure:mnist-0-4-5}
\end{figure}

\textsc{Adam}, \textsc{StoIHT} and \textsc{GraphStoIHT} are excluded from comparison because of their inferior performance. From Figure~\ref{figure:mnist-0-4-5}, we can observe that when the training samples increase, the \textit{F1} score of all methods is increasing correspondingly. But the \textit{F1} score values of \textsc{GraphDA} in \textit{Normalized Model}, \textit{Constant Model}, and \textit{Gaussian Model} are the highest among the six methods.

\subsection{Results from KEGG dataset}
\label{section:experiments:kegg}

To demonstrate that \textsc{GraphDA} can capture more meaningful features during online learning process, we test it on a real-world protein-protein iteration (PPI) network in~\cite{lenail2017graph}\footnote{It was originally provided in KEGG~\cite{kanehisa2016kegg}}. This online learning scenario could be realistic since the training samples can be collected on the fly. More Details of the dataset including the data preprocessing are in Appendix~\ref{appendix:section3}. We explore a specific gene pathway, HSA05213, related with endometrial cancer\footnote{Details of pathway HSA05213(50 genes) can be found in \url{https://www.genome.jp/dbget-bin/www_bget?hsa05213}}. Due to the lack of true labels and ground truth features (genes), we directly use the two data generation strategies in~\cite{lenail2017graph}, namely \textit{Strategy 1} (corresponding to a hard case) and \textit{Strategy 2} (corresponding to an easy case). After the data generation, we have 50 ground truth features. The number of positive and negative samples are both 100. The goal is to find out how many endometrial cancer-related genes are learned from different algorithms as done by~\cite{lenail2017graph}. All algorithms stop when they have seen 200 training samples. 

\begin{figure}[ht!]
\centering
\vspace{-3mm}
\includegraphics[width=8.2cm,height=3.2cm]{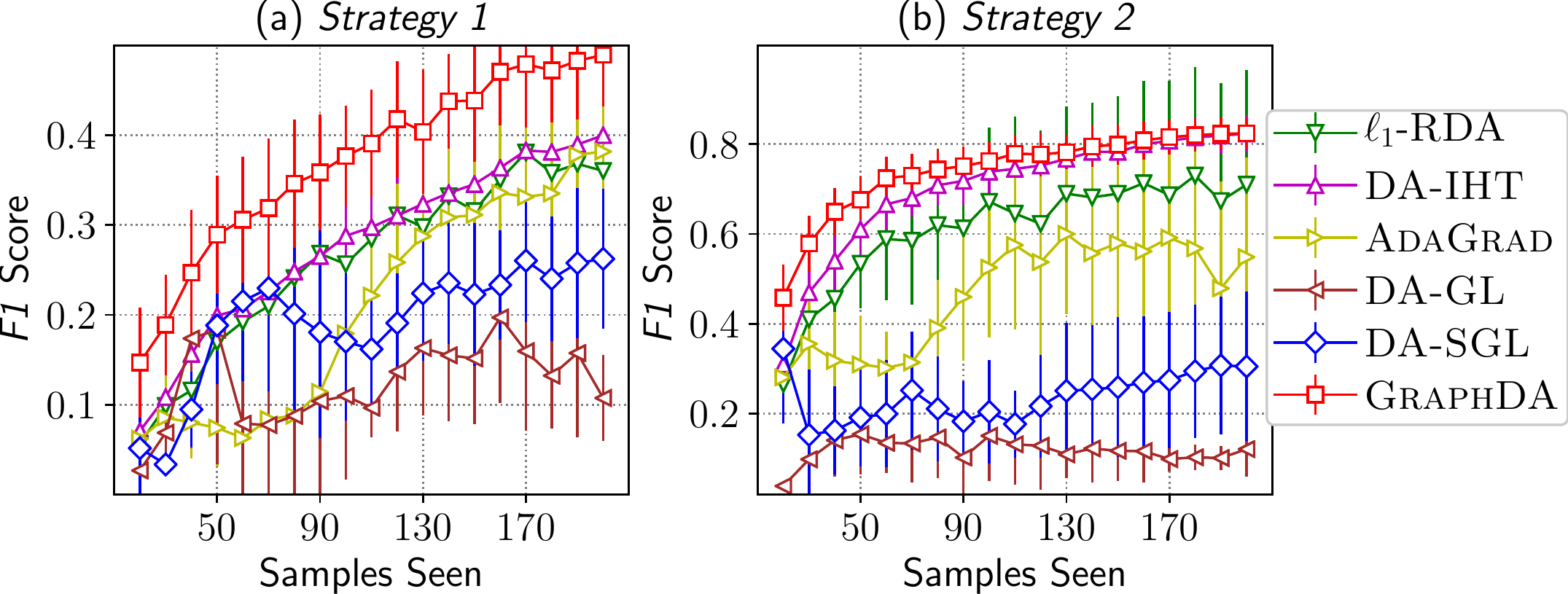}
\vspace{-3mm}
\caption{Node level \textit{F1} score as a function of the number of training samples have seen.}
\vspace{-3mm}
\label{figure:kegg-f1-score}
\end{figure}

We report the feature \textit{F1} score in Figure~\ref{figure:kegg-f1-score}. \textsc{GraphDA} outperforms the other baseline methods in terms of both two strategies, with \textit{F1} score about 0.5 for \textit{Strategy 1} and about 0.9 for \textit{Strategy 2}, higher than the rest methods. Interestingly, \text{DA-OL} and \text{DA-SOL} achieve better results only between 60 and 70 training samples and then become worse between 70 and 100. A possible explanation is that the learned model selected by the tuned parameters is not steady when the number of training samples seen is small. In addition to a better \textit{F1} score, another strength of \textsc{GraphDA} and \textsc{DA-IHT} is that the standard deviation of \textit{F1} score is smaller than other convex-based methods, including \textsc{$\ell_1$-RDA}, \textsc{DA-GL}, and \textsc{DA-SGL}.

\begin{figure}[ht!]
\centering
\vspace{-2mm}
\includegraphics[width=8cm,height=5cm]{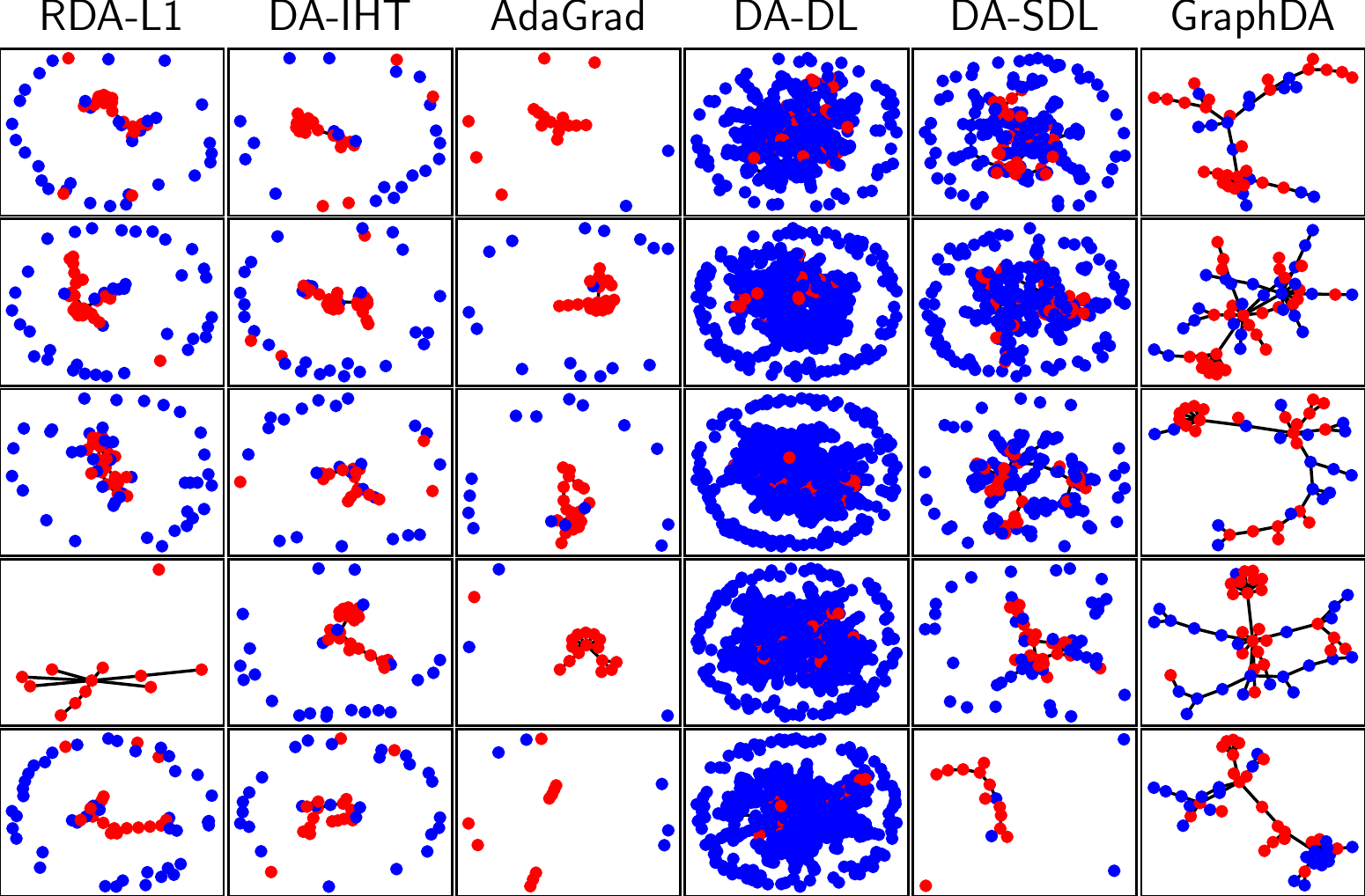}
\vspace{-2mm}
\caption{HSA05213 pathway detected by different methods. The red nodes are the genes in HSA05213 while blue nodes are the genes not in HSA05213. Results of each row are from a specific trial (from trial 1 to trial 5). Each column shows the results found by a specific method.}
\vspace{-3mm}
\label{figure:kegg-learned-genes}
\end{figure}

In Figure~\ref{figure:kegg-learned-genes}, we show the identified genes by different methods. Clearly, \textsc{GraphDA} can find more meaningful genes, indicated by less blue nodes (genes found but not in HSA05213) and more red nodes (genes found and in HSA05213). However, all the other five baseline methods have many isolated nodes (not connected to cancer-related genes).

\section{Conclusion and Future Work}
\label{section:conclusion}
In this paper, we propose a dual averaging-based method, \textsc{GraphDA}, for online graph-structured sparsity constraint problems. We prove that the minimization problem in the dual averaging step can be  formulated as two equivalent optimization problems. By projecting the dual vector and primal variables onto lower dimensional subspaces, \textsc{GraphDA} can capture graph-structure information more effectively. Experimental evaluation on one benchmark dataset and two real-world graph datasets shows that \textsc{GraphDA} achieves better classification performance and stronger interpretability compared with the baseline methods so as to answer the two questions raised at the beginning of the experiment section. It remains interesting if one can prove that both the exact and approximated projections have non-regret bound under some proper assumption, and if one can explore learning a model under the setting that true features are time evolving~\cite{hou2017learning}. 

\section{Acknowledgements}
The work of Yiming Ying is supported by the National Science Foundation (NSF) under Grant No \#1816227. The work of Baojian Zhou and Feng Chen is supported by the NSF under Grant No \#1815696 and \#1750911.

\bibliographystyle{ACM-Reference-Format}
\bibliography{reference}


\begin{thebibliography}{46}


\ifx \showCODEN    \undefined \def \showCODEN     #1{\unskip}     \fi
\ifx \showDOI      \undefined \def \showDOI       #1{#1}\fi
\ifx \showISBNx    \undefined \def \showISBNx     #1{\unskip}     \fi
\ifx \showISBNxiii \undefined \def \showISBNxiii  #1{\unskip}     \fi
\ifx \showISSN     \undefined \def \showISSN      #1{\unskip}     \fi
\ifx \showLCCN     \undefined \def \showLCCN      #1{\unskip}     \fi
\ifx \shownote     \undefined \def \shownote      #1{#1}          \fi
\ifx \showarticletitle \undefined \def \showarticletitle #1{#1}   \fi
\ifx \showURL      \undefined \def \showURL       {\relax}        \fi
\providecommand\bibfield[2]{#2}
\providecommand\bibinfo[2]{#2}
\providecommand\natexlab[1]{#1}
\providecommand\showeprint[2][]{arXiv:#2}

\bibitem[\protect\citeauthoryear{Aksoylar, Orecchia, and Saligrama}{Aksoylar
  et~al\mbox{.}}{2017}]%
        {aksoylar2017connected}
\bibfield{author}{\bibinfo{person}{Cem Aksoylar}, \bibinfo{person}{Lorenzo
  Orecchia}, {and} \bibinfo{person}{Venkatesh Saligrama}.}
  \bibinfo{year}{2017}\natexlab{}.
\newblock \showarticletitle{Connected Subgraph Detection with Mirror Descent on
  SDPs}. In \bibinfo{booktitle}{\emph{ICML}}. \bibinfo{publisher}{PMLR},
  \bibinfo{pages}{51--59}.
\newblock


\bibitem[\protect\citeauthoryear{Arias-Castro, Candes, Durand,
  et~al\mbox{.}}{Arias-Castro et~al\mbox{.}}{2011}]%
        {arias2011detection}
\bibfield{author}{\bibinfo{person}{Ery Arias-Castro},
  \bibinfo{person}{Emmanuel~J Candes}, \bibinfo{person}{Arnaud Durand},
  {et~al\mbox{.}}} \bibinfo{year}{2011}\natexlab{}.
\newblock \showarticletitle{Detection of an anomalous cluster in a network}.
\newblock \bibinfo{journal}{\emph{The Annals of Statistics}}
  \bibinfo{volume}{39}, \bibinfo{number}{1} (\bibinfo{year}{2011}),
  \bibinfo{pages}{278--304}.
\newblock


\bibitem[\protect\citeauthoryear{Bahmani, Raj, and Boufounos}{Bahmani
  et~al\mbox{.}}{2013}]%
        {bahmani2013greedy}
\bibfield{author}{\bibinfo{person}{Sohail Bahmani}, \bibinfo{person}{Bhiksha
  Raj}, {and} \bibinfo{person}{Petros~T Boufounos}.}
  \bibinfo{year}{2013}\natexlab{}.
\newblock \showarticletitle{Greedy sparsity-constrained optimization}.
\newblock \bibinfo{journal}{\emph{JMLR}} \bibinfo{volume}{14},
  \bibinfo{number}{Mar} (\bibinfo{year}{2013}), \bibinfo{pages}{807--841}.
\newblock


\bibitem[\protect\citeauthoryear{Baraniuk, Cevher, Duarte, and Hegde}{Baraniuk
  et~al\mbox{.}}{2010}]%
        {baraniuk2010model}
\bibfield{author}{\bibinfo{person}{Richard~G Baraniuk}, \bibinfo{person}{Volkan
  Cevher}, \bibinfo{person}{Marco~F Duarte}, {and} \bibinfo{person}{Chinmay
  Hegde}.} \bibinfo{year}{2010}\natexlab{}.
\newblock \showarticletitle{Model-based compressive sensing}.
\newblock \bibinfo{journal}{\emph{IEEE Transactions on Information Theory}}
  \bibinfo{volume}{56}, \bibinfo{number}{4} (\bibinfo{year}{2010}),
  \bibinfo{pages}{1982--2001}.
\newblock


\bibitem[\protect\citeauthoryear{Bottou}{Bottou}{1998}]%
        {bottou1998online}
\bibfield{author}{\bibinfo{person}{L{\'e}on Bottou}.}
  \bibinfo{year}{1998}\natexlab{}.
\newblock \showarticletitle{Online learning and stochastic approximations}.
\newblock \bibinfo{journal}{\emph{On-line learning in neural networks}}
  \bibinfo{volume}{17}, \bibinfo{number}{9} (\bibinfo{year}{1998}),
  \bibinfo{pages}{142}.
\newblock


\bibitem[\protect\citeauthoryear{Bottou and Cun}{Bottou and Cun}{2004}]%
        {bottou2004large}
\bibfield{author}{\bibinfo{person}{L{\'e}on Bottou} {and}
  \bibinfo{person}{Yann~L Cun}.} \bibinfo{year}{2004}\natexlab{}.
\newblock \showarticletitle{Large scale online learning}. In
  \bibinfo{booktitle}{\emph{Advances in neural information processing
  systems}}, Vol.~\bibinfo{volume}{16}. \bibinfo{publisher}{MIT Press},
  \bibinfo{pages}{217--224}.
\newblock


\bibitem[\protect\citeauthoryear{Chen and Zhou}{Chen and Zhou}{2016}]%
        {chen2016generalized}
\bibfield{author}{\bibinfo{person}{Feng Chen} {and} \bibinfo{person}{Baojian
  Zhou}.} \bibinfo{year}{2016}\natexlab{}.
\newblock \showarticletitle{A generalized matching pursuit approach for
  graph-structured sparsity}. In \bibinfo{booktitle}{\emph{Proceedings of the
  Twenty-Fifth International Joint Conference on Artificial Intelligence}}.
  \bibinfo{publisher}{AAAI Press}, \bibinfo{pages}{1389--1395}.
\newblock


\bibitem[\protect\citeauthoryear{Chen, Hassani, and Karbasi}{Chen
  et~al\mbox{.}}{2018}]%
        {chen2018online}
\bibfield{author}{\bibinfo{person}{Lin Chen}, \bibinfo{person}{Hamed Hassani},
  {and} \bibinfo{person}{Amin Karbasi}.} \bibinfo{year}{2018}\natexlab{}.
\newblock \showarticletitle{Online Continuous Submodular Maximization}. In
  \bibinfo{booktitle}{\emph{International Conference on Artificial Intelligence
  and Statistics}}. \bibinfo{publisher}{PMLR}, \bibinfo{pages}{1896--1905}.
\newblock


\bibitem[\protect\citeauthoryear{Chuang, Lee, Liu, Lee, and Ideker}{Chuang
  et~al\mbox{.}}{2007}]%
        {chuang2007network}
\bibfield{author}{\bibinfo{person}{Han-Yu Chuang}, \bibinfo{person}{Eunjung
  Lee}, \bibinfo{person}{Yu-Tsueng Liu}, \bibinfo{person}{Doheon Lee}, {and}
  \bibinfo{person}{Trey Ideker}.} \bibinfo{year}{2007}\natexlab{}.
\newblock \showarticletitle{Network-based classification of breast cancer
  metastasis}.
\newblock \bibinfo{journal}{\emph{Molecular systems biology}}
  \bibinfo{volume}{3}, \bibinfo{number}{1} (\bibinfo{year}{2007}),
  \bibinfo{pages}{140}.
\newblock


\bibitem[\protect\citeauthoryear{Draief, Ganesh, and Massouli{\'e}}{Draief
  et~al\mbox{.}}{2006}]%
        {draief2006thresholds}
\bibfield{author}{\bibinfo{person}{Moez Draief}, \bibinfo{person}{Ayalvadi
  Ganesh}, {and} \bibinfo{person}{Laurent Massouli{\'e}}.}
  \bibinfo{year}{2006}\natexlab{}.
\newblock \showarticletitle{Thresholds for virus spread on networks}. In
  \bibinfo{booktitle}{\emph{Proceedings of the 1st international conference on
  Performance evaluation methodolgies and tools}}. \bibinfo{publisher}{ACM},
  \bibinfo{pages}{51}.
\newblock


\bibitem[\protect\citeauthoryear{Duchi, Hazan, and Singer}{Duchi
  et~al\mbox{.}}{2011}]%
        {duchi2011adaptive}
\bibfield{author}{\bibinfo{person}{John Duchi}, \bibinfo{person}{Elad Hazan},
  {and} \bibinfo{person}{Yoram Singer}.} \bibinfo{year}{2011}\natexlab{}.
\newblock \showarticletitle{Adaptive subgradient methods for online learning
  and stochastic optimization}.
\newblock \bibinfo{journal}{\emph{JMLR}} \bibinfo{volume}{12},
  \bibinfo{number}{Jul} (\bibinfo{year}{2011}), \bibinfo{pages}{2121--2159}.
\newblock


\bibitem[\protect\citeauthoryear{Duchi, Shalev-Shwartz, Singer, and
  Chandra}{Duchi et~al\mbox{.}}{2008}]%
        {duchi2008efficient}
\bibfield{author}{\bibinfo{person}{John Duchi}, \bibinfo{person}{Shai
  Shalev-Shwartz}, \bibinfo{person}{Yoram Singer}, {and}
  \bibinfo{person}{Tushar Chandra}.} \bibinfo{year}{2008}\natexlab{}.
\newblock \showarticletitle{Efficient projections onto the l1-ball for learning
  in high dimensions}. In \bibinfo{booktitle}{\emph{ICML}}.
  \bibinfo{publisher}{ACM}, \bibinfo{pages}{272--279}.
\newblock


\bibitem[\protect\citeauthoryear{Duchi and Singer}{Duchi and Singer}{2009}]%
        {duchi2009efficient}
\bibfield{author}{\bibinfo{person}{John Duchi} {and} \bibinfo{person}{Yoram
  Singer}.} \bibinfo{year}{2009}\natexlab{}.
\newblock \showarticletitle{Efficient online and batch learning using forward
  backward splitting}.
\newblock \bibinfo{journal}{\emph{JMLR}} \bibinfo{volume}{10},
  \bibinfo{number}{Dec} (\bibinfo{year}{2009}), \bibinfo{pages}{2899--2934}.
\newblock


\bibitem[\protect\citeauthoryear{Gao, Li, and Zhang}{Gao et~al\mbox{.}}{2018}]%
        {gao2018online}
\bibfield{author}{\bibinfo{person}{Xiand Gao}, \bibinfo{person}{Xiaobo Li},
  {and} \bibinfo{person}{Shuzhong Zhang}.} \bibinfo{year}{2018}\natexlab{}.
\newblock \showarticletitle{Online Learning with Non-Convex Losses and
  Non-Stationary Regret}. In \bibinfo{booktitle}{\emph{International Conference
  on Artificial Intelligence and Statistics}}. \bibinfo{publisher}{PMLR},
  \bibinfo{pages}{235--243}.
\newblock


\bibitem[\protect\citeauthoryear{Gonen and Hazan}{Gonen and Hazan}{2018}]%
        {gonen2018learning}
\bibfield{author}{\bibinfo{person}{Alon Gonen} {and} \bibinfo{person}{Elad
  Hazan}.} \bibinfo{year}{2018}\natexlab{}.
\newblock \showarticletitle{Learning in Non-convex Games with an Optimization
  Oracle}.
\newblock \bibinfo{journal}{\emph{arXiv preprint arXiv:1810.07362}}
  (\bibinfo{year}{2018}).
\newblock


\bibitem[\protect\citeauthoryear{Hanley and McNeil}{Hanley and McNeil}{1982}]%
        {hanley1982meaning}
\bibfield{author}{\bibinfo{person}{James~A Hanley} {and}
  \bibinfo{person}{Barbara~J McNeil}.} \bibinfo{year}{1982}\natexlab{}.
\newblock \showarticletitle{The meaning and use of the area under a receiver
  operating characteristic (ROC) curve.}
\newblock \bibinfo{journal}{\emph{Radiology}} \bibinfo{volume}{143},
  \bibinfo{number}{1} (\bibinfo{year}{1982}), \bibinfo{pages}{29--36}.
\newblock


\bibitem[\protect\citeauthoryear{Hazan et~al\mbox{.}}{Hazan
  et~al\mbox{.}}{2016}]%
        {hazan2016introduction}
\bibfield{author}{\bibinfo{person}{Elad Hazan} {et~al\mbox{.}}}
  \bibinfo{year}{2016}\natexlab{}.
\newblock \showarticletitle{Introduction to online convex optimization}.
\newblock \bibinfo{journal}{\emph{Foundations and Trends{\textregistered} in
  Optimization}} \bibinfo{volume}{2}, \bibinfo{number}{3-4}
  (\bibinfo{year}{2016}), \bibinfo{pages}{157--325}.
\newblock


\bibitem[\protect\citeauthoryear{Hazan, Singh, and Zhang}{Hazan
  et~al\mbox{.}}{2017}]%
        {hazan2017efficient}
\bibfield{author}{\bibinfo{person}{Elad Hazan}, \bibinfo{person}{Karan Singh},
  {and} \bibinfo{person}{Cyril Zhang}.} \bibinfo{year}{2017}\natexlab{}.
\newblock \showarticletitle{Efficient Regret Minimization in Non-Convex Games}.
  In \bibinfo{booktitle}{\emph{ICML}}. \bibinfo{publisher}{PMLR},
  \bibinfo{pages}{1433--1441}.
\newblock


\bibitem[\protect\citeauthoryear{Hegde, Indyk, and Schmidt}{Hegde
  et~al\mbox{.}}{2014a}]%
        {hegde2014fast_pcst}
\bibfield{author}{\bibinfo{person}{Chinmay Hegde}, \bibinfo{person}{Piotr
  Indyk}, {and} \bibinfo{person}{Ludwig Schmidt}.}
  \bibinfo{year}{2014}\natexlab{a}.
\newblock \showarticletitle{A fast, adaptive variant of the Goemans-Williamson
  scheme for the prize-collecting Steiner tree problem}. In
  \bibinfo{booktitle}{\emph{Workshop of the 11th DIMACS Implementation
  Challenge}}.
\newblock


\bibitem[\protect\citeauthoryear{Hegde, Indyk, and Schmidt}{Hegde
  et~al\mbox{.}}{2014b}]%
        {hegde2014fast}
\bibfield{author}{\bibinfo{person}{Chinmay Hegde}, \bibinfo{person}{Piotr
  Indyk}, {and} \bibinfo{person}{Ludwig Schmidt}.}
  \bibinfo{year}{2014}\natexlab{b}.
\newblock \showarticletitle{A fast approximation algorithm for tree-sparse
  recovery}. In \bibinfo{booktitle}{\emph{Information Theory (ISIT), 2014 IEEE
  International Symposium on}}. \bibinfo{publisher}{IEEE},
  \bibinfo{pages}{1842--1846}.
\newblock


\bibitem[\protect\citeauthoryear{Hegde, Indyk, and Schmidt}{Hegde
  et~al\mbox{.}}{2015a}]%
        {hegde2015approximation}
\bibfield{author}{\bibinfo{person}{Chinmay Hegde}, \bibinfo{person}{Piotr
  Indyk}, {and} \bibinfo{person}{Ludwig Schmidt}.}
  \bibinfo{year}{2015}\natexlab{a}.
\newblock \showarticletitle{Approximation algorithms for model-based
  compressive sensing}.
\newblock \bibinfo{journal}{\emph{IEEE Transactions on Information Theory}}
  \bibinfo{volume}{61}, \bibinfo{number}{9} (\bibinfo{year}{2015}),
  \bibinfo{pages}{5129--5147}.
\newblock


\bibitem[\protect\citeauthoryear{Hegde, Indyk, and Schmidt}{Hegde
  et~al\mbox{.}}{2015b}]%
        {hegde2015nearly}
\bibfield{author}{\bibinfo{person}{Chinmay Hegde}, \bibinfo{person}{Piotr
  Indyk}, {and} \bibinfo{person}{Ludwig Schmidt}.}
  \bibinfo{year}{2015}\natexlab{b}.
\newblock \showarticletitle{A nearly-linear time framework for graph-structured
  sparsity}. In \bibinfo{booktitle}{\emph{ICML}}. \bibinfo{publisher}{PMLR},
  \bibinfo{pages}{928--937}.
\newblock


\bibitem[\protect\citeauthoryear{Hegde, Indyk, and Schmidt}{Hegde
  et~al\mbox{.}}{2016}]%
        {hegde2016fast}
\bibfield{author}{\bibinfo{person}{Chinmay Hegde}, \bibinfo{person}{Piotr
  Indyk}, {and} \bibinfo{person}{Ludwig Schmidt}.}
  \bibinfo{year}{2016}\natexlab{}.
\newblock \showarticletitle{Fast recovery from a union of subspaces}. In
  \bibinfo{booktitle}{\emph{NIPS}}. \bibinfo{pages}{4394--4402}.
\newblock


\bibitem[\protect\citeauthoryear{Hou, Zhang, and Zhou}{Hou
  et~al\mbox{.}}{2017}]%
        {hou2017learning}
\bibfield{author}{\bibinfo{person}{Bo-Jian Hou}, \bibinfo{person}{Lijun Zhang},
  {and} \bibinfo{person}{Zhi-Hua Zhou}.} \bibinfo{year}{2017}\natexlab{}.
\newblock \showarticletitle{Learning with Feature Evolvable Streams}. In
  \bibinfo{booktitle}{\emph{NIPS}}. \bibinfo{pages}{1416--1426}.
\newblock


\bibitem[\protect\citeauthoryear{Jacob, Obozinski, and Vert}{Jacob
  et~al\mbox{.}}{2009}]%
        {jacob2009group}
\bibfield{author}{\bibinfo{person}{Laurent Jacob}, \bibinfo{person}{Guillaume
  Obozinski}, {and} \bibinfo{person}{Jean-Philippe Vert}.}
  \bibinfo{year}{2009}\natexlab{}.
\newblock \showarticletitle{Group lasso with overlap and graph lasso}. In
  \bibinfo{booktitle}{\emph{ICML}}. ACM, \bibinfo{publisher}{PMLR},
  \bibinfo{pages}{433--440}.
\newblock


\bibitem[\protect\citeauthoryear{Johnson, Minkoff, and Phillips}{Johnson
  et~al\mbox{.}}{2000}]%
        {johnson2000prize}
\bibfield{author}{\bibinfo{person}{David~S Johnson}, \bibinfo{person}{Maria
  Minkoff}, {and} \bibinfo{person}{Steven Phillips}.}
  \bibinfo{year}{2000}\natexlab{}.
\newblock \showarticletitle{The prize collecting Steiner tree problem: theory
  and practice}. In \bibinfo{booktitle}{\emph{SODA}}.
  \bibinfo{publisher}{Society for Industrial and Applied Mathematics},
  \bibinfo{pages}{760--769}.
\newblock


\bibitem[\protect\citeauthoryear{Kanehisa, Furumichi, Tanabe, Sato, and
  Morishima}{Kanehisa et~al\mbox{.}}{2016}]%
        {kanehisa2016kegg}
\bibfield{author}{\bibinfo{person}{Minoru Kanehisa}, \bibinfo{person}{Miho
  Furumichi}, \bibinfo{person}{Mao Tanabe}, \bibinfo{person}{Yoko Sato}, {and}
  \bibinfo{person}{Kanae Morishima}.} \bibinfo{year}{2016}\natexlab{}.
\newblock \showarticletitle{KEGG: new perspectives on genomes, pathways,
  diseases and drugs}.
\newblock \bibinfo{journal}{\emph{Nucleic acids research}}
  \bibinfo{volume}{45}, \bibinfo{number}{D1} (\bibinfo{year}{2016}),
  \bibinfo{pages}{D353--D361}.
\newblock


\bibitem[\protect\citeauthoryear{Kingma and Ba}{Kingma and Ba}{2014}]%
        {kingma2014adam}
\bibfield{author}{\bibinfo{person}{Diederik~P Kingma} {and}
  \bibinfo{person}{Jimmy Ba}.} \bibinfo{year}{2014}\natexlab{}.
\newblock \showarticletitle{Adam: A method for stochastic optimization}.
\newblock \bibinfo{journal}{\emph{arXiv preprint arXiv:1412.6980}}
  (\bibinfo{year}{2014}).
\newblock


\bibitem[\protect\citeauthoryear{Lafond, Wai, and Moulines}{Lafond
  et~al\mbox{.}}{2015}]%
        {lafond2015online}
\bibfield{author}{\bibinfo{person}{Jean Lafond}, \bibinfo{person}{Hoi-To Wai},
  {and} \bibinfo{person}{Eric Moulines}.} \bibinfo{year}{2015}\natexlab{}.
\newblock \showarticletitle{On the online Frank-Wolfe algorithms for convex and
  non-convex optimizations}.
\newblock \bibinfo{journal}{\emph{arXiv:1510.01171}} (\bibinfo{year}{2015}).
\newblock


\bibitem[\protect\citeauthoryear{Langford, Li, and Zhang}{Langford
  et~al\mbox{.}}{2009}]%
        {langford2009sparse}
\bibfield{author}{\bibinfo{person}{John Langford}, \bibinfo{person}{Lihong Li},
  {and} \bibinfo{person}{Tong Zhang}.} \bibinfo{year}{2009}\natexlab{}.
\newblock \showarticletitle{Sparse online learning via truncated gradient}.
\newblock \bibinfo{journal}{\emph{JMLR}} \bibinfo{volume}{10},
  \bibinfo{number}{Mar} (\bibinfo{year}{2009}), \bibinfo{pages}{777--801}.
\newblock


\bibitem[\protect\citeauthoryear{LeCun}{LeCun}{1998}]%
        {lecun1998mnist}
\bibfield{author}{\bibinfo{person}{Yann LeCun}.}
  \bibinfo{year}{1998}\natexlab{}.
\newblock \showarticletitle{The MNIST database of handwritten digits}.
\newblock \bibinfo{journal}{\emph{http://yann. lecun. com/exdb/mnist/}}
  (\bibinfo{year}{1998}).
\newblock


\bibitem[\protect\citeauthoryear{LeNail, Schmidt, Li, Ehrenberger, Sachs,
  Jegelka, and Fraenkel}{LeNail et~al\mbox{.}}{2017}]%
        {lenail2017graph}
\bibfield{author}{\bibinfo{person}{Alexander LeNail}, \bibinfo{person}{Ludwig
  Schmidt}, \bibinfo{person}{Johnathan Li}, \bibinfo{person}{Tobias
  Ehrenberger}, \bibinfo{person}{Karen Sachs}, \bibinfo{person}{Stefanie
  Jegelka}, {and} \bibinfo{person}{Ernest Fraenkel}.}
  \bibinfo{year}{2017}\natexlab{}.
\newblock \showarticletitle{Graph-Sparse Logistic Regression}.
\newblock \bibinfo{journal}{\emph{arXiv preprint arXiv:1712.05510}}
  (\bibinfo{year}{2017}).
\newblock


\bibitem[\protect\citeauthoryear{Li, Wernersson, Hansen, Horn, Mercer,
  Slodkowicz, Workman, Rigina, Rapacki, St{\ae}rfeldt, et~al\mbox{.}}{Li
  et~al\mbox{.}}{2017}]%
        {li2017scored}
\bibfield{author}{\bibinfo{person}{Taibo Li}, \bibinfo{person}{Rasmus
  Wernersson}, \bibinfo{person}{Rasmus~B Hansen}, \bibinfo{person}{Heiko Horn},
  \bibinfo{person}{Johnathan Mercer}, \bibinfo{person}{Greg Slodkowicz},
  \bibinfo{person}{Christopher~T Workman}, \bibinfo{person}{Olga Rigina},
  \bibinfo{person}{Kristoffer Rapacki}, \bibinfo{person}{Hans~H St{\ae}rfeldt},
  {et~al\mbox{.}}} \bibinfo{year}{2017}\natexlab{}.
\newblock \showarticletitle{A scored human protein--protein interaction network
  to catalyze genomic interpretation}.
\newblock \bibinfo{journal}{\emph{Nature methods}} \bibinfo{volume}{14},
  \bibinfo{number}{1} (\bibinfo{year}{2017}), \bibinfo{pages}{61}.
\newblock


\bibitem[\protect\citeauthoryear{Nesterov}{Nesterov}{2009}]%
        {nesterov2009primal}
\bibfield{author}{\bibinfo{person}{Yurii Nesterov}.}
  \bibinfo{year}{2009}\natexlab{}.
\newblock \showarticletitle{Primal-dual subgradient methods for convex
  problems}.
\newblock \bibinfo{journal}{\emph{Mathematical programming}}
  \bibinfo{volume}{120}, \bibinfo{number}{1} (\bibinfo{year}{2009}),
  \bibinfo{pages}{221--259}.
\newblock


\bibitem[\protect\citeauthoryear{Nguyen, Needell, and Woolf}{Nguyen
  et~al\mbox{.}}{2017}]%
        {nguyen2017linear}
\bibfield{author}{\bibinfo{person}{Nam Nguyen}, \bibinfo{person}{Deanna
  Needell}, {and} \bibinfo{person}{Tina Woolf}.}
  \bibinfo{year}{2017}\natexlab{}.
\newblock \showarticletitle{Linear convergence of stochastic iterative greedy
  algorithms with sparse constraints}.
\newblock \bibinfo{journal}{\emph{IEEE Transactions on Information Theory}}
  \bibinfo{volume}{63}, \bibinfo{number}{11} (\bibinfo{year}{2017}),
  \bibinfo{pages}{6869--6895}.
\newblock


\bibitem[\protect\citeauthoryear{Qian, Saligrama, and Chen}{Qian
  et~al\mbox{.}}{2014}]%
        {qian2014connected}
\bibfield{author}{\bibinfo{person}{Jing Qian}, \bibinfo{person}{Venkatesh
  Saligrama}, {and} \bibinfo{person}{Yuting Chen}.}
  \bibinfo{year}{2014}\natexlab{}.
\newblock \showarticletitle{Connected Sub-graph Detection.}. In
  \bibinfo{booktitle}{\emph{AISTATS}}, Vol.~\bibinfo{volume}{14}.
  \bibinfo{pages}{22--25}.
\newblock


\bibitem[\protect\citeauthoryear{Rozenshtein, Anagnostopoulos, Gionis, and
  Tatti}{Rozenshtein et~al\mbox{.}}{2014}]%
        {rozenshtein2014event}
\bibfield{author}{\bibinfo{person}{Polina Rozenshtein}, \bibinfo{person}{Aris
  Anagnostopoulos}, \bibinfo{person}{Aristides Gionis}, {and}
  \bibinfo{person}{Nikolaj Tatti}.} \bibinfo{year}{2014}\natexlab{}.
\newblock \showarticletitle{Event detection in activity networks}. In
  \bibinfo{booktitle}{\emph{KDD}}. \bibinfo{publisher}{ACM},
  \bibinfo{pages}{1176--1185}.
\newblock


\bibitem[\protect\citeauthoryear{Shalev-Shwartz et~al\mbox{.}}{Shalev-Shwartz
  et~al\mbox{.}}{2012}]%
        {shalev2012online}
\bibfield{author}{\bibinfo{person}{Shai Shalev-Shwartz} {et~al\mbox{.}}}
  \bibinfo{year}{2012}\natexlab{}.
\newblock \showarticletitle{Online learning and online convex optimization}.
\newblock \bibinfo{journal}{\emph{Foundations and Trends{\textregistered} in
  Machine Learning}} \bibinfo{volume}{4}, \bibinfo{number}{2}
  (\bibinfo{year}{2012}), \bibinfo{pages}{107--194}.
\newblock


\bibitem[\protect\citeauthoryear{Tibshirani}{Tibshirani}{1996}]%
        {tibshirani1996regression}
\bibfield{author}{\bibinfo{person}{Robert Tibshirani}.}
  \bibinfo{year}{1996}\natexlab{}.
\newblock \showarticletitle{Regression shrinkage and selection via the lasso}.
\newblock \bibinfo{journal}{\emph{Journal of the Royal Statistical Society.
  Series B (Methodological)}} (\bibinfo{year}{1996}),
  \bibinfo{pages}{267--288}.
\newblock


\bibitem[\protect\citeauthoryear{Xiao}{Xiao}{2010}]%
        {xiao2010dual}
\bibfield{author}{\bibinfo{person}{Lin Xiao}.} \bibinfo{year}{2010}\natexlab{}.
\newblock \showarticletitle{Dual averaging methods for regularized stochastic
  learning and online optimization}.
\newblock \bibinfo{journal}{\emph{JMLR}} \bibinfo{volume}{11},
  \bibinfo{number}{Oct} (\bibinfo{year}{2010}), \bibinfo{pages}{2543--2596}.
\newblock


\bibitem[\protect\citeauthoryear{Yang, Xu, King, and Lyu}{Yang
  et~al\mbox{.}}{2010}]%
        {yang2010online}
\bibfield{author}{\bibinfo{person}{Haiqin Yang}, \bibinfo{person}{Zenglin Xu},
  \bibinfo{person}{Irwin King}, {and} \bibinfo{person}{Michael~R Lyu}.}
  \bibinfo{year}{2010}\natexlab{}.
\newblock \showarticletitle{Online learning for group lasso}. In
  \bibinfo{booktitle}{\emph{ICML}}. \bibinfo{publisher}{PMLR},
  \bibinfo{pages}{1191--1198}.
\newblock


\bibitem[\protect\citeauthoryear{Yang, Deng, Hajiesmaili, Tan, and Wong}{Yang
  et~al\mbox{.}}{2018}]%
        {yang2018optimal}
\bibfield{author}{\bibinfo{person}{Lin Yang}, \bibinfo{person}{Lei Deng},
  \bibinfo{person}{Mohammad~H Hajiesmaili}, \bibinfo{person}{Cheng Tan}, {and}
  \bibinfo{person}{Wing~Shing Wong}.} \bibinfo{year}{2018}\natexlab{}.
\newblock \showarticletitle{An optimal algorithm for online non-convex
  learning}.
\newblock \bibinfo{journal}{\emph{Proceedings of the ACM on Measurement and
  Analysis of Computing Systems}} \bibinfo{volume}{2}, \bibinfo{number}{2}
  (\bibinfo{year}{2018}), \bibinfo{pages}{25}.
\newblock


\bibitem[\protect\citeauthoryear{Ying and Pontil}{Ying and Pontil}{2008}]%
        {ying2008online}
\bibfield{author}{\bibinfo{person}{Yiming Ying} {and}
  \bibinfo{person}{Massimiliano Pontil}.} \bibinfo{year}{2008}\natexlab{}.
\newblock \showarticletitle{Online gradient descent learning algorithms}.
\newblock \bibinfo{journal}{\emph{Foundations of Computational Mathematics}}
  \bibinfo{volume}{8}, \bibinfo{number}{5} (\bibinfo{year}{2008}),
  \bibinfo{pages}{561--596}.
\newblock


\bibitem[\protect\citeauthoryear{Yuan, Li, and Zhang}{Yuan
  et~al\mbox{.}}{2014}]%
        {yuan2014gradient}
\bibfield{author}{\bibinfo{person}{Xiaotong Yuan}, \bibinfo{person}{Ping Li},
  {and} \bibinfo{person}{Tong Zhang}.} \bibinfo{year}{2014}\natexlab{}.
\newblock \showarticletitle{Gradient hard thresholding pursuit for
  sparsity-constrained optimization}. In \bibinfo{booktitle}{\emph{ICML}}.
  \bibinfo{publisher}{PMLR}, \bibinfo{pages}{127--135}.
\newblock


\bibitem[\protect\citeauthoryear{Zhou, Yuan, and Feng}{Zhou
  et~al\mbox{.}}{2018}]%
        {zhou2018efficient}
\bibfield{author}{\bibinfo{person}{Pan Zhou}, \bibinfo{person}{Xiaotong Yuan},
  {and} \bibinfo{person}{Jiashi Feng}.} \bibinfo{year}{2018}\natexlab{}.
\newblock \showarticletitle{Efficient Stochastic Gradient Hard Thresholding}.
  In \bibinfo{booktitle}{\emph{NIPS}}. \bibinfo{publisher}{Curran Associates,
  Inc.}, \bibinfo{pages}{1985--1994}.
\newblock


\bibitem[\protect\citeauthoryear{Zinkevich}{Zinkevich}{2003}]%
        {zinkevich2003online}
\bibfield{author}{\bibinfo{person}{Martin Zinkevich}.}
  \bibinfo{year}{2003}\natexlab{}.
\newblock \showarticletitle{Online convex programming and generalized
  infinitesimal gradient ascent}. In \bibinfo{booktitle}{\emph{ICML}}.
  \bibinfo{publisher}{PMLR}, \bibinfo{pages}{928--936}.
\newblock


\end{thebibliography}

\clearpage
\appendix
\section{Reproducibility}
\label{appendix:section1}
\subsection{Implementation details}
All experiments are tested on a server of Intel Xeon(R) 2.40GHZ E5-2680 with 251GB of RAM. The code is written in Python2.7 and C language with the standard C11. The implementation of the head and tail projection follows the original implementation\footnote{The two projections were originally implemented in C++, which are available at: \url{https://github.com/ludwigschmidt/cluster_approx}. We implement them by using C language. Taking the advantage of the continuous memory of arrays in C, our code is faster than original one.}. We present the pseudo code in Algorithm~\ref{alg:head_projection} below. The two projections are essentially two binary search algorithms. Each iteration of the binary search executes the Prize Collecting Steiner Tree (PCST) algorithm~\cite{johnson2000prize} on the target graph. Both projections have two main parameters: a lower bound sparsity $s_l$ and an upper bound  sparsity $s_h$. In all of the experiments, two sparsity parameters have been set to $s_l = p/2$ and $s_h=s_l*(1 + \omega)$ for the head projection, where $\omega$ is the tolerance parameter set to $0.1$. For the tail projection, we set $s_l = s$ and $s_h = s_l*(1 + \omega)$. The binary search algorithm terminates when it reaches $max\_iter=20$ maximum iterations. Line 7 of Algorithm~\ref{alg:head_projection} is the PCST algorithm proposed in~\cite{hegde2014fast_pcst}. We use a non-root version and Goemans-Williamson pruning method to prune the final forest.

\begin{algorithm}[H]
\caption{Head/Tail Projection (${\rm P}({\bm w}, \mathbb{M})$)~\cite{hegde2015nearly}}
\begin{algorithmic}[1]
 \STATE \textbf{Input}: ${\bm w}, \textit{max\_iter}, \mathbb{M} = (\mathbb{G}(\mathbb{V},\mathbb{E},{\bm c}), s_l, s_h, g)$
 \STATE ${\bm \pi} = {\bm w} \cdot {\bm w}$  \quad// vector dot product, i.e., $\pi_i = w_i * w_i$
 \STATE $\lambda_l = 0, \lambda_h = \max \{\pi_1,\pi_2,\ldots,\pi_p\}$, $\lambda_m = 0$, $t =0$
 \REPEAT 
 \STATE $\lambda_m = (\lambda_l + \lambda_h) / 2$ 
 \STATE ${\bm c}_m = \lambda_m \cdot {\bm c}$  \quad// scale dot product, i.e., $({\bm c}_m)_{i} = \lambda_m * c_i$
 \STATE $\mathcal{F}$ = PCST($\mathbb{G}(\mathbb{V},\mathbb{E},{\bm c}_m),{\bm \pi}, g$)
 \IF{ $s_l < |\mathcal{F}| < s_h$ }
 \RETURN ${\bm w}_\mathcal{F}$
 \ENDIF
 \IF{ $|\mathcal{F}| > s_h$ }
 \STATE $\lambda_l = \lambda_m$
 \ELSE
 \STATE $\lambda_h = \lambda_m$
 \ENDIF
 \STATE $t = t + 1$
 \UNTIL{t > max\_iter}
 \STATE ${\bm c}_h = \lambda_h \cdot {\bm c}$
 \STATE $\mathcal{F}$ = PCST($\mathbb{G}(\mathbb{V},\mathbb{E},{\bm c}_h),{\bm \pi}, g$)
 \RETURN ${\bm w}_\mathcal{F}$
\end{algorithmic}\label{alg:head_projection}
\end{algorithm}

\subsection{Parameter tuning}
Initial parameters ${\bf w}_0$ of all baseline methods and proposed algorithms are zero vectors ${\bf w}_0={\bf 0}$, which means we train all methods starting from a zero point. We list all related methods and their corresponding parameter settings below. (1) \textsc{$\ell_1$-RDA} is the enhanced version provided in Algorithm 2 of~\cite{xiao2010dual}. There are three parameters: The $\ell_1$-regularization parameter $\lambda$ is chosen from $\{$0.0001, 0.0005, 0.001, 0.005, 0.01, 0.03, 0.05, 0.1, 0.3, 0.5, 1.0, 3.0, 5.0, 10.0$\}$ which is a superset used in~\cite{xiao2010dual}. The parameter $\gamma$ to control the learning rate is chosen from $\{$1.0, 5.0, 10.0, 50.0, 100.0, 500.0, 1000.0, 5000.0, 10000.0 $\}$, and the sparsity-enhancing parameter $\rho$ is chosen from $\{$0.0, 0.00001, 0.000005, 0.0001, 0.0005, 0.001, 0.005, 0.01, 0.05, 0.1, 0.5, 1.0$\}$, where 0.0 is for the basic regularization. All the three parameter sets are supersets used in~\cite{xiao2010dual}. (2) \textsc{Adam}. We directly use the parameters $\beta_1 = 0.9, \beta_2 = 0.999, \epsilon=10^{-8}$ provided in~\cite{kingma2014adam}. For the magnitude of steps in parameter space $\alpha$, we choose it from $\{$0.0001, 0.0005, 0.001, 0.005, 0.01, 0.05, 0.1, 0.5 $\}$. (3) \textsc{DA-GL/SGL} have two main parameters, $\lambda$ to control the sparsity and $\gamma$ to control the learning rate. We choose $3\times 3$ grids as groups for Benchmark dataset and choose $2\times 2$ grids for MNIST dataset. (4) \textsc{DA-SGL} has an additional parameter $\gamma_g$, which is set to 1.0 for all groups as done in~\cite{yang2010online}. For each group $i$, there exists an additional parameter $r_i$ for DA-SGL. We set it as default value $r_i=1$ as recommended by the authors. (5) \textsc{AdaGrad} has two main parameters, $\lambda$ to control sparsity and $\eta$ to control the learning rate, which is from $\{$0.0001 , 0.0005, 0.001, 0.005, 0.01, 0.05, 0.1, 0.5, 1.0, 5.0, 10.0, 50.0, 100.0, 500.0, 1000.0, 5000.0$\}$. (6) \textsc{StoIHT} has two parameters: sparsity $s$ from $\{$5, 10, 15, 20, 25, 26, 30, 35, 40, 45, 46, 50, 55, 60, 65, 70, 75, 80, 85, 90, 92, 95, 100, 105, 110, 115, 120, 125, 130, 132, 135, 140, 145, 150$ \}$, and $\gamma$ to control the learning rate. (7) \textsc{GraphStoIHT} shares the same parameter settings (sparsity $s$ and $\gamma$) as \textsc{GraphDA}. The block size of \textsc{GraphStoIHT} and \textsc{StoIHT} are set to 1. (8) \textsc{GraphDA} has parameters $\gamma$ and $s$.

\section{More experimental results}

\subsection{More results from Benchmark dataset}
We present the results of \textit{Graph02}, \textit{Graph03} and \textit{Graph04} in Table~\ref{table:fig-1-selected-accuracy_graph2}, \ref{table:fig-1-selected-accuracy_graph3}, \ref{table:fig-1-selected-accuracy_graph4}, respectively. Basically, we show the classification performance (\textit{Acc}, \textit{Miss}, \textit{AUC}) and feature-level performance (\textit{Pre}, \textit{Rec}, \textit{F1}, \textit{NR}). The size of validating and testing dataset are both 400. All results are averaged from 20 trials of experiment.

\setlength{\tabcolsep}{1pt}
\begin{table}[ht!]
\vspace{-1mm}
\centering
\scriptsize
\caption{Performance of \textit{Graph02}}
\begin{tabular}{c c c c c c c c}
\hline\hline
Method & $\textit{Pre}_{{\bm w}_t}$ & $\textit{Rec}_{{\bm w}_t}$ & $\textit{F1}_{{\bm w}_t}$ & $\textit{AUC}_{{\bf w}_t,{\bar{\bf w}}_t}$ & $\textit{Acc}_{{\bf w}_t,{\bar{\bf w}}_t}$ & $\textit{Miss}_{{\bf w}_t,{\bar{\bf w}}_t}$ & $\textit{NR}_{{\bf w}_t,{\bar{\bf w}}_t}$\\
\hline
\textsc{Adam} & 0.042 & \textbf{1.000} & 0.081 & (0.697, 0.663) & (0.696, 0.663) & (144, 151) & (100.0\%, 100.0\%) \\
\textsc{$\ell_1$-RDA} & 0.371 & 0.876 & 0.494 & (0.772, 0.732) & (0.772, 0.731) & (127, 140) & (13.31\%, 96.47\%) \\
\textsc{AdaGrad} & 0.342 & 0.888 & 0.470 & (0.771, 0.711) & (0.771, 0.711) & (125, 141) & (14.43\%, 100.0\%) \\
\textsc{DA-GL} & 0.270 & 0.976 & 0.415 & (0.809, 0.755) & (0.809, 0.755) & (114, 138) & (17.07\%, 100.0\%) \\
\textsc{DA-SGL} & 0.283 & 0.948 & 0.314 & (0.777, 0.738) & (0.777, 0.737) & (123, 141) & (45.42\%, 100.0\%) \\
\textsc{StoIHT} & 0.102 & 0.217 & 0.132 & (0.586, 0.557) & (0.586, 0.557) & (171, 179) & (9.48\%, 45.60\%) \\
\textsc{GraphStoIHT} & 0.279 & 0.355 & 0.287 & (0.669, 0.620) & (0.669, 0.620) & (150, 158) & (7.31\%, \textbf{19.29}\%) \\
\textsc{DA-IHT} & 0.679 & 0.741 & 0.694 & (0.776, 0.733) & (0.776, 0.733) & (132, 141) & (4.86\%, 42.86\%) \\
\textsc{GraphDA} & \textbf{0.855} & 0.870 & \textbf{0.850} & (\textbf{0.811}, \textbf{0.799}) & (\textbf{0.811}, \textbf{0.799}) & (\textbf{106}, \textbf{107}) & (\textbf{4.55}\%, 43.89\%) \\
\hline
\end{tabular}
\vspace{-1mm}
\label{table:fig-1-selected-accuracy_graph2}
\end{table}

\setlength{\tabcolsep}{1pt}
\vspace{-1mm}
\begin{table}[ht!]
\centering
\scriptsize
\caption{Performance of \textit{Graph03}}
\begin{tabular}{c c c c c c c c}
\hline\hline
Method & $\textit{Pre}_{{\bm w}_t}$ & $\textit{Rec}_{{\bm w}_t}$ & $\textit{F1}_{{\bm w}_t}$ & $\textit{AUC}_{{\bf w}_t,{\bar{\bf w}}_t}$ & $\textit{Acc}_{{\bf w}_t,{\bar{\bf w}}_t}$ & $\textit{Miss}_{{\bf w}_t, {\bar{\bf w}}_t}$ & $\textit{NR}_{{\bf w}_t,{\bar{\bf w}}_t}$\\
\hline
\textsc{Adam} & 0.084 & \textbf{1.000} & 0.156 & (0.820, 0.789) & (0.820, 0.788) & (104, 116) & (100.0\%, 100.0\%) \\
\textsc{$\ell_1$-RDA} & 0.340 & 0.940 & 0.488 & (0.870, 0.833) & (0.869, 0.833) & ( 88, 104) & (26.15\%, 99.51\%) \\
\textsc{AdaGrad} & 0.318 & 0.942 & 0.462 & (0.872, 0.825) & (0.872, 0.824) & ( 88, 106) & (29.21\%, 100.0\%) \\
\textsc{DA-GL} & 0.289 & 0.990 & 0.443 & (0.894, 0.853) & (0.894, 0.853) & ( 78, 100) & (32.07\%, 100.0\%) \\
\textsc{DA-SGL} & 0.166 & 0.990 & 0.283 & (0.883, 0.829) & (0.883, 0.828) & ( 89, 111) & (52.19\%, 100.0\%) \\
\textsc{StoIHT} & 0.156 & 0.208 & 0.175 & (0.635, 0.593) & (0.634, 0.592) & (162, 173) & (11.62\%, 50.60\%) \\
\textsc{GraphStoIHT} & 0.276 & 0.223 & 0.217 & (0.666, 0.640) & (0.667, 0.640) & (154, 163) & (8.93\%, \textbf{23.20}\%) \\
\textsc{DA-IHT} & 0.716 & 0.782 & 0.734 & (0.865, 0.834) & (0.865, 0.834) & ( 95, 103) & (9.59\%, 63.20\%) \\
\textsc{GraphDA} & \textbf{0.856} & 0.881 & \textbf{0.864} & (\textbf{0.898}, \textbf{0.885}) & (\textbf{0.897}, \textbf{0.885}) & (\textbf{72}, \textbf{80}) & (\textbf{8.82}\%, 63.14\%) \\
\hline
\end{tabular}
\vspace{-1mm}
\label{table:fig-1-selected-accuracy_graph3}
\end{table}

\setlength{\tabcolsep}{1pt}
\vspace{-1mm}
\begin{table}[ht!]
\centering
\scriptsize
\caption{Performance of \textit{Graph04}}
\begin{tabular}{c c c c c c c c}
\hline\hline
Method & $\textit{Pre}_{{\bm w}_t}$ & $\textit{Rec}_{{\bm w}_t}$ & $\textit{F1}_{{\bm w}_t}$ & $\textit{AUC}_{{\bf w}_t,{\bar{\bf w}}_t}$ & ${Acc}_{{\bf w}_t,{\bar{\bf w}}_t}$ & $\textit{Miss}_{{\bf w}_t,{\bar{\bf w}}_t}$ & $\textit{NR}_{{\bf w}_t,{\bar{\bf w}}_t}$\\
\hline
\textsc{Adam} & 0.121 & \textbf{1.000} & 0.216 & (0.884, 0.858) & (0.884, 0.858) & ( 77,  90) & (100.0\%, 100.0\%) \\
\textsc{$\ell_1$-RDA} & 0.361 & 0.961 & 0.513 & (0.917, 0.896) & (0.917, 0.896) & ( 66,  79) & (36.19\%, 99.21\%) \\
\textsc{AdaGrad} & 0.376 & 0.961 & 0.528 & (0.919, 0.889) & (0.919, 0.889) & ( 67,  81) & (35.43\%, 100.0\%) \\
\textsc{DA-GL} & 0.476 & 0.994 & 0.640 & (\textbf{0.942}, 0.918) & (\textbf{0.941}, 0.918) & ( \textbf{54},  73) & (26.03\%, 100.0\%) \\
\textsc{DA-SGL} & 0.238 & 0.988 & 0.379 & (0.931, 0.894) & (0.931, 0.894) & ( 65,  85) & (53.31\%, 100.0\%) \\
\textsc{StoIHT} & 0.207 & 0.203 & 0.204 & (0.689, 0.639) & (0.689, 0.639) & (148, 160) & (11.86\%, 47.88\%) \\
\textsc{GraphStoIHT} & 0.439 & 0.245 & 0.299 & (0.743, 0.699) & (0.743, 0.699) & (131, 143) & (\textbf{7.77}\%, \textbf{19.96}\%) \\
\textsc{DA-IHT} & 0.780 & 0.801 & 0.788 & (0.919, 0.898) & (0.919, 0.899) & ( 74,  82) & (12.51\%, 72.72\%) \\
\textsc{GraphDA} & \textbf{0.931} & 0.865 & \textbf{0.895} & (0.939, \textbf{0.925}) & (0.939, \textbf{0.925}) & (56, \textbf{61}) & (11.30\%, 72.80\%) \\
\hline
\end{tabular}
\vspace{-1mm}
\label{table:fig-1-selected-accuracy_graph4}
\end{table}

\begin{figure*}
\centering
\includegraphics[width=16cm,height=8cm]{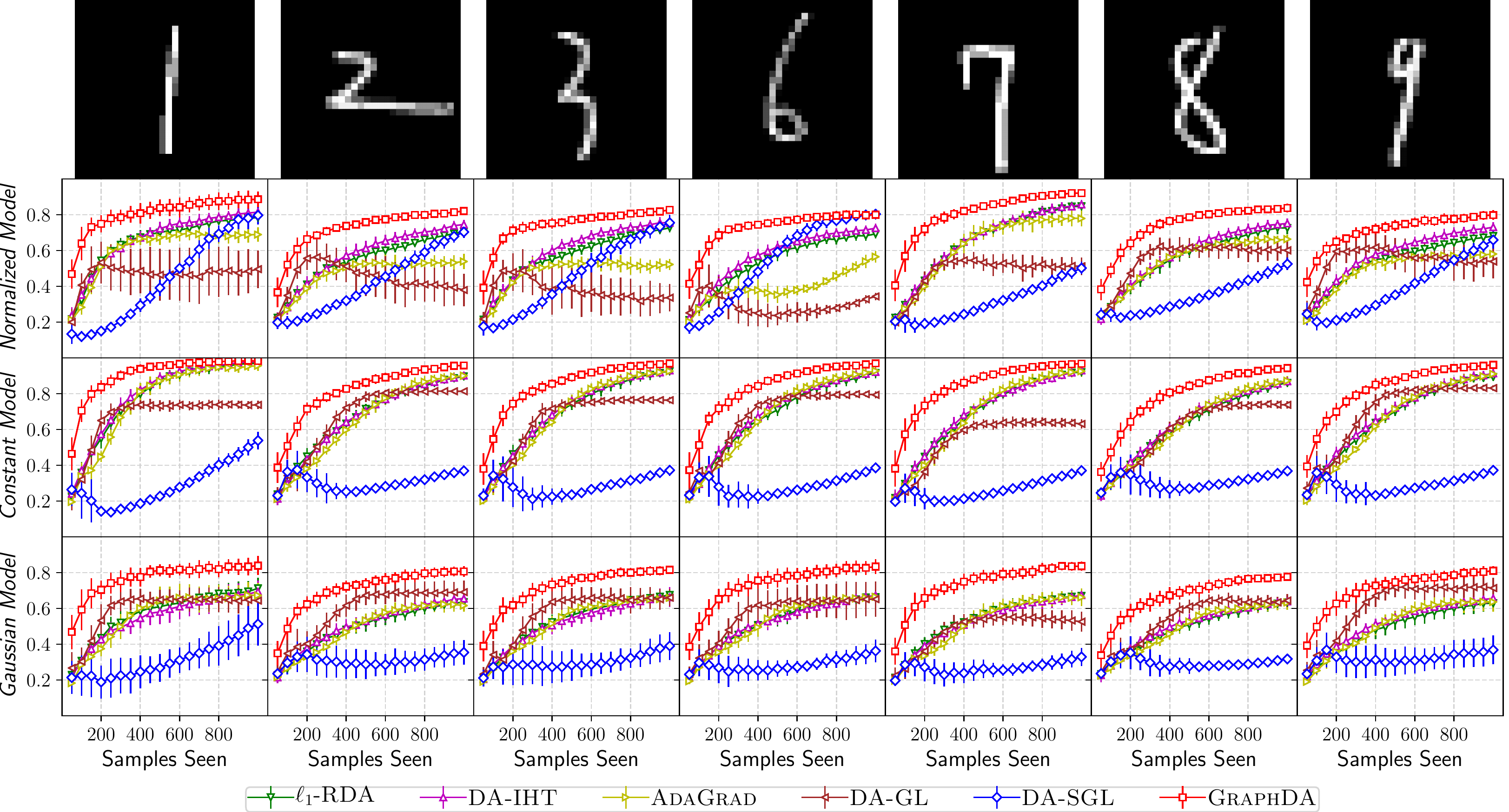}
\caption{Seven handwritten digits 1, 2, 3, 6, 7, 8 and 9 (top row) and the \textit{F1} score as a function of samples seen (2nd to 4th row)}
\label{figure:mnist_rest_results}
\end{figure*}

\subsection{More results from MNIST Dataset}
We show the results on image id $\{1, 2, 3, 6, 7, 8, 9\}$. To make the task more challenging, these 7 images are the sparsest images (with the digits forming a connected component) selected from MNIST dataset. The sparsity parameter $s$ of \textsc{DA-IHT} and \textsc{GraphDA} is chosen from $\{$30, 32, \ldots, 100$\}$ with step size 2. Figure~\ref{figure:mnist_rest_results} reports the results.

\subsection{More results from KEGG Pathways}
\label{appendix:section3}

This PPI network contains a total of 229 pathways. Each pathway often involves a specific biological function, e.g. metabolism. We restrict our analysis on 225 pathways (by removing 4 empty pathways), which contains 5,374 genes with 78,545 edges. These genes form a connected graph. There exists an edge if two proteins (genes) physically interact with each other~\cite{li2017scored}. Weights of edges stand for the confidence of these interactions. There are 7,368 genes with null values. We sample these null values from $\mathcal{N}(0,1)$. 

Due to the inferior performance of \textsc{Adam}, \textsc{StoIHT} and \textsc{GraphStoIHT}, we exclude them from experimental evaluation. Notice that \textsc{DA-GL} and \textsc{DA-SGL} need groups as priors. However, the groups (pathways) of this PPI network have overlapping features. To remedy this issue, we simply replicate these overlapping features as suggested in~\cite{yang2010online,jacob2009group}, and by doing so, the two baselines are still applicable. For these two non-convex methods, we choose the sparsity parameter from $\{$40, 45, 50, 55, 60 $\}$. We report the averaged results from 20 trials in  Figure~\ref{figure:kegg-learned-genes-rest}.

\begin{figure}[H]
\centering
\includegraphics[width=8.3cm,height=6cm]{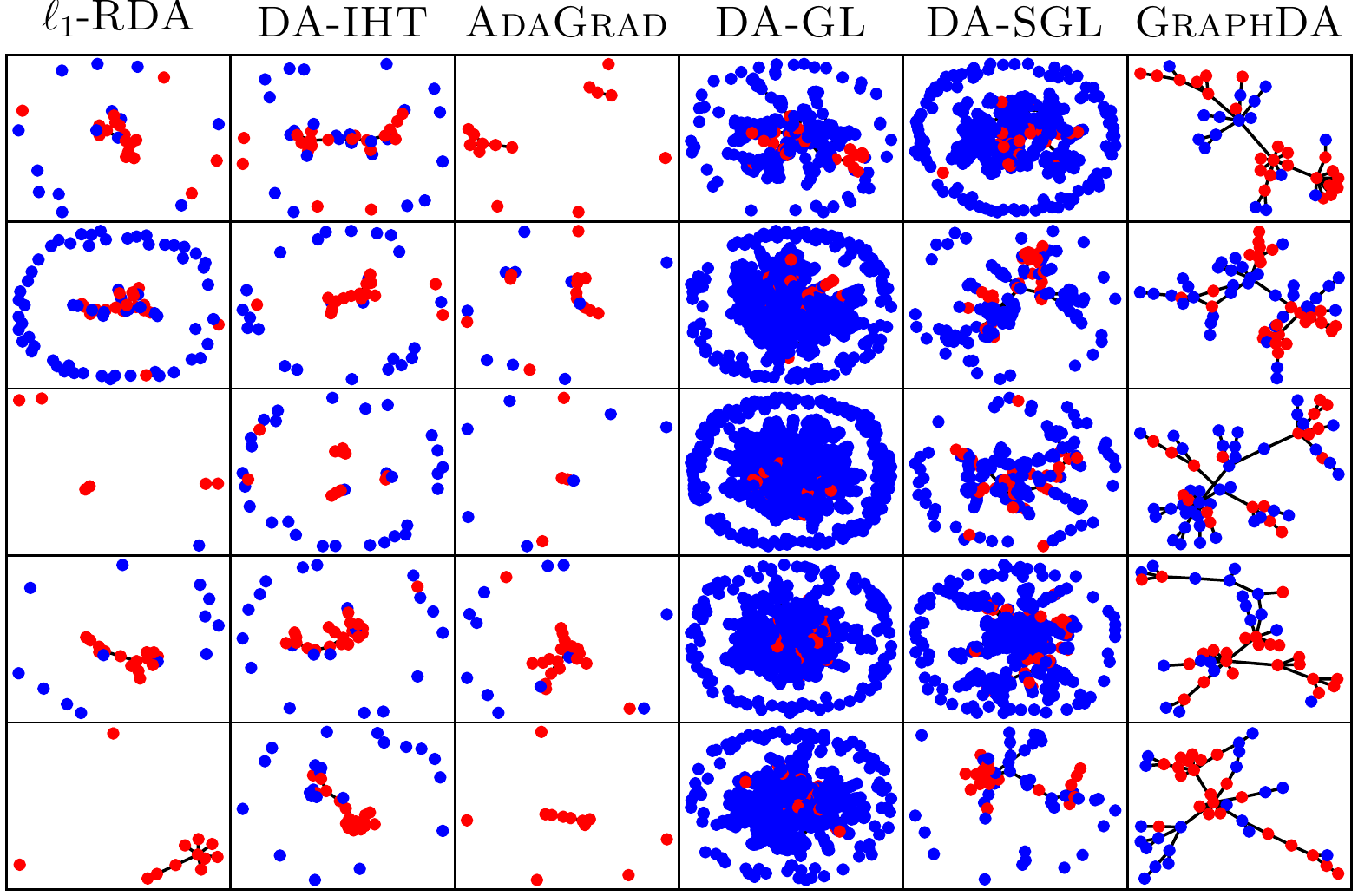}
\caption{HSA05213 pathway detected by different methods. The red nodes are the genes in HSA05213 while blue nodes are the genes not in HSA05213. Results of each row are from a specific trial (from trial 6 to trial 10).}
\label{figure:kegg-learned-genes-rest}
\end{figure}

\end{document}